%% file: main.tex
\documentclass[]{infiniai}
\usepackage{microtype}
\usepackage{amsfonts}
\usepackage{graphicx}
\usepackage{booktabs}
\usepackage{wrapfig}
\usepackage{float}
\usepackage{placeins}
\usepackage{minted}
\usepackage{colortbl}
\newcolumntype{P}[1]{>{\centering\arraybackslash}p{#1}} 
\newcommand{\gcell}{\cellcolor{gray!45}} 
\usepackage{amsmath}
\usepackage{amsthm}
\usepackage{amssymb} 
\usepackage{subcaption}

\usepackage{algpseudocode}
\usepackage[linesnumbered,lined,boxed,commentsnumbered,ruled,longend]{algorithm2e}
\newcommand{\sys}{\textsc{FlashRT}\xspace}

\DeclareMathOperator*{\argmin}{argmin} 
\usepackage[capitalize,noabbrev]{cleveref}
\crefname{appendix}{appendix}{appendices}
\Crefname{appendix}{Appendix}{Appendices}
\theoremstyle{plain}
\newtheorem{theorem}{Theorem}[section]
\newtheorem{proposition}[theorem]{Proposition}
\newtheorem{lemma}[theorem]{Lemma}

\theoremstyle{definition}

\theoremstyle{remark}

\crefname{theorem}{Lemma}{Lemmas}
\Crefname{theorem}{Lemma}{Lemmas}
\usepackage{enumitem}

\title{FlashRT: Agent Harness for Guiding Agents to Deploy Real-Time Multimodal Applications}

\author{Krish Agarwal$^1$, Zhuoming Chen$^1$, Yanyuan Qin$^2$, Zhenyu Gu$^2$, Atri Rudra$^3$, Beidi Chen$^1$}
\contact{\{krisha2, zhuominc, beidic\}@andrew.cmu.edu, \{yanyuan.qin, zhenyu.gu\}@amd.com, atri@buffalo.edu}
\affiliation{$^1$Carnegie Mellon University,
$^2$AMD,
$^3$University at Buffalo}

\abstract{
Real-time multimodal applications, including voice agents and interactive video generation, compose heterogeneous models into pipelines whose efficient deployment requires application-specific decisions about placement, streaming, and intra-model parallelism. Existing serving systems and auto-parallelism compilers commit to limited transformations and fixed workload assumptions, so achieving high performance on a new application requires hand-crafting an efficient implementation. We present \sys, an agent harness that guides coding agents to lift simple developer-written reference implementations into optimized multi-GPU deployments that flexibly weigh target metrics like latency and throughput. Using a new \textbf{chain-of-program} paradigm, \sys directs a generic coding agent through a multi-pass transformation process where an agent transforms the reference into an intermediate representation (IR) to capture data dependencies and persistent-state scopes, validates this IR via a sequential interpreter, and performs static analyses to identify candidate transformations. Then, the agent iteratively implements, verifies, and benchmarks each candidate under a measurement-gated optimization loop to produce effective deployments that span different hardware budgets. Across various applications, including video world models and multimodal LLMs, \sys converts reference implementations into highly efficient deployments, delivering up to \textbf{$\sim$70$\times$} latency reduction and \textbf{2.8$\times$} throughput improvement on NVIDIA B200 GPUs. On AMD MI355X GPUs, \sys matches the peak latency reduction while increasing peak throughput improvement to \textbf{3.6$\times$}, demonstrating that agent-driven optimization can be more scalable on platforms with less mature expert optimization. In fact, for Qwen3-Omni text-to-audio inference, \sys reduces response latency by \textbf{65\%} compared to the expert vLLM-Omni implementation on AMD MI355X.
}
\metadata[Github]{\url{https://github.com/Infini-AI-Lab/FlashRT}}
\metadata[Website]{\url{https://infini-ai-lab.github.io/flashrt-blog}}

\begin{document}

\maketitle

\input{sections/introduction}
\input{sections/related_work}
\input{sections/problem}
\input{sections/method}
\input{sections/experiments}
\input{sections/conclusion}

\clearpage

\bibliographystyle{assets/plainnat}
\bibliography{references}

\clearpage

\beginappendix
\crefalias{section}{appendix}
\crefalias{subsection}{appendix}
\crefalias{subsubsection}{appendix}
\input{sections/supp}

\end{document}

%% file: sections/introduction.tex
\section{Introduction}

As strong generative models emerge across various modalities, there is increasing adoption of real-time multimodal applications~\citep{robbyantteam2026lingbotworld, ball2025genie3, defossez2024moshi, kodaira2025streamdiffusion}, naturally followed by demand to serve them efficiently. Compared to extensive prior work on LLM serving~\citep{kwon2023vllm, yu2022orca, zheng2024sglang, zhong2024distserve}, multimodal applications present a fundamentally different design space: models of various modalities and architectures are composed into highly complex pipelines, each with its own deployment considerations, e.g., an application that prioritizes throughput (e.g., frame rate) may prefer full disaggregation while a latency-sensitive application may prefer finer-grained intra-component scheduling and co-location. 
These diverse applications cannot be served effectively under a single system that hosts only a limited set of deployment policies. Instead, new system efforts are required for each application, typically through manual implementation, but \textbf{steady growth in the number of new and diverse applications makes hand-crafting efficient deployments unscalable}.

Existing systems try to automate deployment to reduce human effort but fall short for three key reasons. \textbf{(1) Limited deployment strategies}: existing multi-modal frameworks like vLLM-Omni~\citep{yin2026vllmomni} and Cornserve~\citep{chung2026cornserve} provide useful abstractions for multimodal serving, but they restrict to static deployment policies (e.g., full disaggregation around high-level stage boundaries and full co-location within a stage), limiting efficacy on general latency or throughput targets. \textbf{(2) Limited workload coverage:} auto-parallelism frameworks like FlexFlow~\citep{jia2018datamodelparallelismdeep}, GSPMD~\citep{xu2021gspmd}, Alpa~\citep{zheng2022alpa}, and Unity~\citep{unger2022unity} can automatically choose sharding, placement, and pipelining, but they specialize to one fixed workload and cannot generalize to diverse multimodal applications. \textbf{(3) Optimization granularity:} existing compilation frameworks, like TVM~\citep{chen2018tvm} and TASO~\citep{jia2019taso}, though highly effective in optimizing deep learning workloads, are out of scope as they target only operator-level improvements. Moreover, such methods rely on a well-defined optimization granularity level, which does not exist for arbitrary multimodal applications.

\begin{wrapfigure}{r}{0.5\linewidth}
    \vspace{-6mm}
    \centering
    \includegraphics[width=\linewidth]{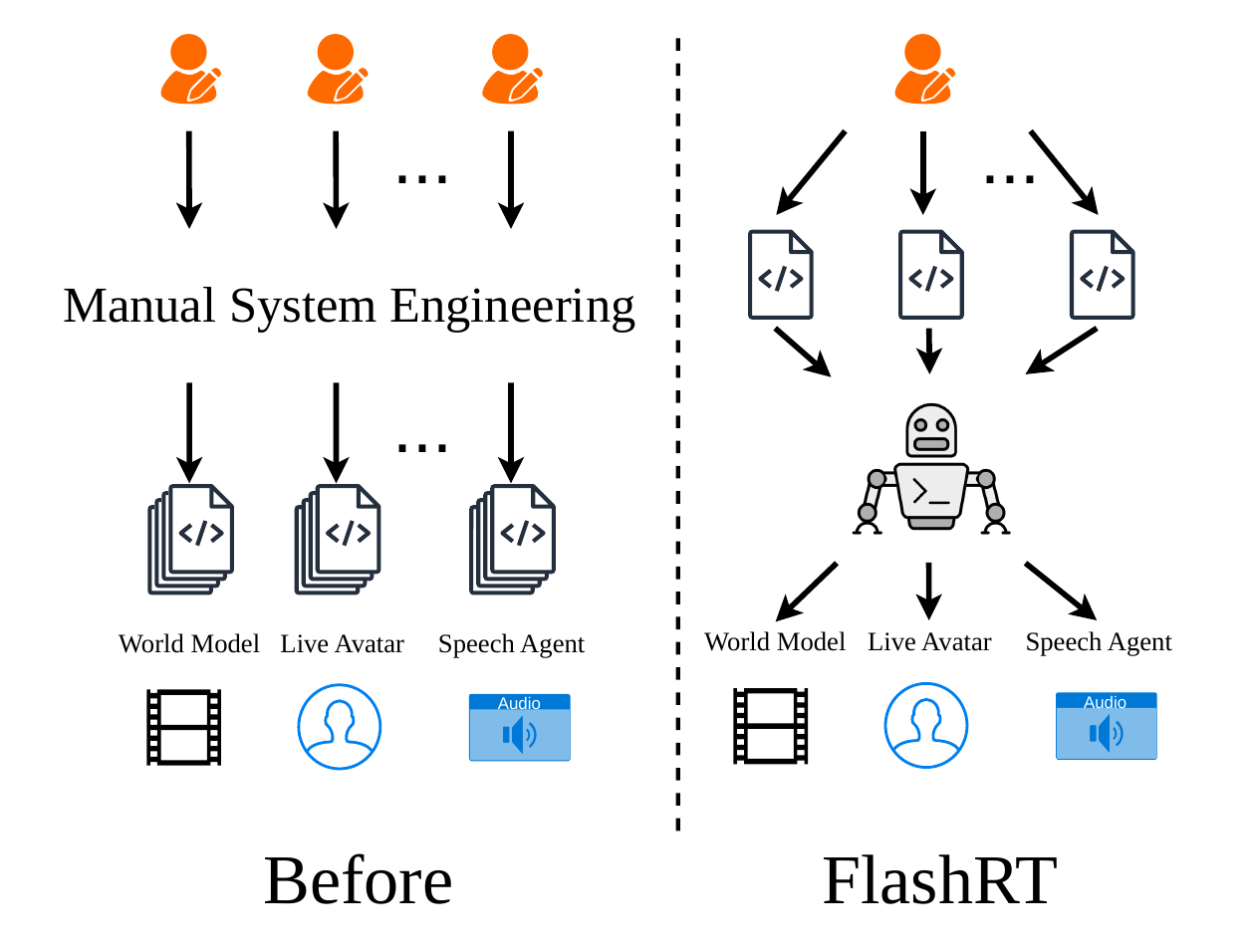}
    \vspace{-6mm}
    \caption{Previously, efficiently serving diverse multimodal applications required manual systems effort per application. \sys solves this by guiding a coding agent to design efficient deployments.}
    \label{fig:overview}
    \vspace{-3mm}
\end{wrapfigure}

An ideal serving system for multimodal applications should allow users to flexibly write intuitive, unoptimized, single-GPU implementations and automatically derive specialized system infrastructure for each application's individual needs. Such a system would directly address the key limitations of previous works by providing unconstrained deployment scope, workload flexibility, and adaptive optimization granularity. However, this is challenging to accomplish with rule-based systems since the deployment problem is NP-hard, as we elaborate in~\Cref{sec:problem}. This is especially problematic if deployment is performed at the finest granularity, e.g., kernel-level, which would result in too large a search space. However, any higher level of granularity is not even properly defined for a rule-based system, as the optimal granularity may be heterogeneous between different pipeline components.

Fortunately, AI agents~\citep{yang2024sweagent, wang2025openhands} have shown significant progress in automatically synthesizing code from simple specifications and performing low-level optimizations~\citep{ouyang2025kernelbench, liao2026kernelevolve, wei2025astra, shypula2024learningperformanceimprovingcodeedits}. Our key insight is that \textbf{coding agents are natively capable of writing efficient end-to-end implementations for complex multimodal pipelines}, as illustrated in~\Cref{fig:overview}. In particular, \textbf{agents can employ higher-level reasoning to organize a target pipeline at adaptive, heterogeneous granularities, reducing the deployment search space while enabling discovery of highly efficient implementations}. However, naively prompting an agent to directly transform a baseline implementation into an efficient deployment is not effective.
\begin{itemize}[itemsep=0.0pt,topsep=0pt,leftmargin=*]
    \item \textbf{Chain-of-program paradigm}: we identify that current agents cannot effectively perform direct translation in a single step. Inspired by existing approaches like chain-of-thought prompting for LLMs~\citep{wei2023cot} as well as compiler frameworks that progressively lower frontend code into a backend, we observe that agents are much more effective when required to first convert the user reference to a structured intermediate representation (IR), perform static analysis, and then translate the IR into effective deployments.
    \item \textbf{Application-grounded validation loop}: not only can agents fail to properly explore the diverse search space, but they may also easily produce incorrect solutions. We find that agents deliver stronger guarantees by enforcing a self-driven loop where the agent explores, implements, and measures various hypotheses; importantly, the agent can design an application-specific test harness that drives simulated user input through the backend implementation, which is critical for grounding validation and benchmarking results in the application-specific user experience. 
\end{itemize}
Combining these insights, we introduce \sys, a system for serving multimodal applications that guides a generic coding agent through a structured workflow to automatically convert highly flexible user implementations into efficient deployments. Across a range of multimodal applications, we show that \sys synthesizes low-latency deployments that are competitive with expert-designed systems. Compared to third-party serving systems and other expert-written systems, \sys generates deployments that can flexibly tradeoff between time-to-first-output (TTFO) latency and system throughput. On the applications we evaluate, \sys successfully converts baseline implementations into highly efficient deployments, reaching \textbf{$\sim$70$\times$} latency reduction and \textbf{3.6$\times$} throughput gain. Importantly, this agentic system alone can generalize to highly diverse applications on both NVIDIA B200 and AMD MI355X hardware, while still requiring no human intervention beyond supplying a reference implementation. As such, \sys proves to be an effective framework that can substantially reduce manual systems effort.

%% file: sections/related_work.tex
\section{Related Work}
\label{sec:related}

\paragraph{Real-time multimodal serving.}
Many new applications compose heterogeneous models into real-time pipelines. Examples include speech-to-speech voice agents~\citep{defossez2024moshi, xu2025qwenomni, communication2023seamless}, audio-driven avatar generation~\citep{zhang2025musetalk, tian2024emo, xu2024hallo, peng2024synctalk, huang2026liveavatar}, real-time video world modeling~\citep{kodaira2025streamdiffusion, ball2025genie3, sun2026worldplay, robbyantteam2026lingbotworld}, and embodied vision-language-action policies~\citep{black2026pi0}. Multimodal serving systems, including vLLM-Omni~\citep{yin2026vllmomni}, Cornserve~\citep{chung2026cornserve}, and ModServe~\citep{qiu2025modserve}, can express each application as a graph over a small fixed set of well-defined stages. While this provides clean abstractions for new applications, it leads to restrictive deployment policies that are not always optimal, such as required inter-stage disaggregation.

\paragraph{Auto-parallelism and {ML} compilers.}
Auto-parallelism systems such as FlexFlow~\citep{jia2018datamodelparallelismdeep}, GSPMD~\citep{xu2021gspmd}, Alpa~\citep{zheng2022alpa}, and Unity~\citep{unger2022unity} jointly choose data, model, and pipeline parallelism for distributed deep learning, but their search spaces and cost models are calibrated to dense DNN training and do not transfer to inference pipelines composed of heterogeneous models with disjoint runtimes. Pipeline-parallel methods like PipeDream~\citep{narayanan2019pipedream} provide useful primitives but assume a single homogeneous training graph. Operator-level compilers such as TVM~\citep{chen2018tvm}, TASO~\citep{jia2019taso}, and Halide~\citep{ragankelley2013halide} aggressively rewrite kernel-level computation, but their granularity for optimization does not touch important considerations like device placement or intra-model stage overlap.

\paragraph{Coding agents for performance optimization.}
Agent systems for software engineering~\citep{yang2024sweagent, wang2025openhands, hong2024metagpt, shinn2023reflexion} have demonstrated that LLMs paired with environment feedback can navigate large codebases and iteratively repair their own outputs. For GPU kernel generation and optimization~\citep{ouyang2025kernelbench, liao2026kernelevolve, wei2025astra, lange2025robustagenticcudakernel, baronio2025kevin, li2026cudal1, li2025autotriton, yang2025cudaforge}, recent work has explored using agents with traditional compiler pass selection~\citep{lin2025awarecompiler}, scientific-discovery search~\citep{novikov2025alphaevolve, Romera-Paredes2024Mathematical}, and reward design~\citep{ma2024eureka}. \sys differs from these in that it optimizes end-to-end multimodal deployment across multiple model runtimes.

%% file: sections/problem.tex
\section{Problem Formulation}
\label{sec:problem}

\sys takes as input a synchronous single-GPU reference implementation
$P_{\mathrm{ref}}$ of a multimodal application. $P_{\mathrm{ref}}$ defines the
application semantics and any persistent state scopes (e.g., caches or streaming buffers). \sys returns a multi-GPU deployment
$P_{\mathrm{dep}}$ that may use disaggregation, co-location, streaming, and intra-model parallelism while preserving the behavior of
$P_{\mathrm{ref}}$.

We model an application as a task graph $G=(\mathcal V,E)$, where each
$v\in\mathcal V$ is a computation region. The application processes input in
batches indexed by $j$, and the graph induces one operation instance $(j,v)$
per batch. Each edge
$(u,v,\lambda)\in E\subseteq\mathcal V\times\mathcal V\times\{0,1\}$ denotes a
data or state dependence, with $\lambda=0$ an intra-batch (data) dependence of
$(j,v)$ on $(j,u)$ and $\lambda=1$ a cross-batch (state) dependence of $(j,v)$
on the previous instance $(j-1,u)$ (e.g., a KV cache that batch $j$ inherits
from batch $j-1$). A deployment specifies a placement
\[
\rho:\mathcal V\to\mathcal R,
\]
where each resource $r\in\mathcal R$ may be a single- or multi-GPU group,
and a non-preemptive schedule
\[
A_r:\mathbb R_{\ge 0}\to
\{(j,v):\rho(v)=r\}\cup\{\varnothing\},
\]
where $\varnothing$ indicates the resource is idle. Let $S_{j,v}$ be the start time of instance $(j,v)$. Each edge $(u,v,\lambda)\in E$ imposes
\[
S_{j,v}
\;\ge\;
S_{j-\lambda,u}+\tau_u(\rho)+\kappa_{u,v}(\rho),
\]
whenever the predecessor instance $(j-\lambda,u)$ exists. Here
$\tau_u(\rho)$ is the placement-dependent execution time of $u$, and
$\kappa_{u,v}(\rho)$ captures synchronization or transfer cost. The schedule
must also satisfy resource capacity, i.e., each resource non-preemptively executes at most one assigned instance at a time.

The objective is to design a deployment that minimizes application-level serving
metrics, such as response latency or throughput, subject to correctness constraints. More detailed formulations are provided in~\Cref{sec:analysis:setup}, which formalizes these two metrics and shows that latency is governed by the critical path from input to output, while sustainable throughput is governed by the most heavily loaded resource. \Cref{sec:analysis:app1,sec:analysis:app2} apply these bounds to representative streaming pipelines, and both reveal the same latency-throughput tradeoff: co-locating operations on a shared resource shortens the critical path, whereas disaggregating them onto separate resources and pipelining across batches reduces the load on the most heavily loaded resource, so no single deployment is best for both targets. This deployment problem is hard even in restricted cases. With a single batch,
no dependencies, no communication costs, and a makespan objective (minimizing the time to finish all operations), it reduces to classical non-preemptive multiprocessor scheduling, which is
NP-hard~\citep{lenstra1977complexity}. Multimodal serving strictly generalizes this setting: operations have data and state dependencies, execution times depend on placement and parallelization, and communication costs depend on co-location. A proof is provided in~\Cref{sec:analysis:np_hard}. Because execution and communication costs couple the deployment decisions and finding an optimum is NP-hard, an efficient deployment can neither be prescribed by a fixed policy nor solved exactly; it must instead be discovered per application by jointly reasoning about program structure, resource assignment, scheduling, and parallelism.

%% file: sections/method.tex
\section{\sys}
\label{sec:method}

\begin{figure}[!t]
    \centering
\vspace{2pt}
    \begin{subfigure}[t]{0.58\textwidth}
        \vspace{0pt}
        \centering
\begin{minted}[
    fontsize=\scriptsize,
    fontfamily=cmtt,
    breaklines,
    frame=single,
    bgcolor=gray!8,
    numbersep=4pt,
]{python}
def run_conversational_agent(user_prompt_audio):
    user_prompt_text = run_asr(user_prompt_audio, HF)
    response_text = run_llm(user_prompt_text, vLLM)
    response_audio = concat([
        audio_chunk for audio_chunk in
        run_tts(response_text, vLLM_Omni)
    ])
    response_video = _run_liveavatar_s2v(response_audio)
    return combined(response_audio, response_video)

def _run_liveavatar_s2v(audio):
    video = []
    for idx in range(0, len(audio_chunk), MODEL_BATCH_SIZE):
        x = audio[idx : idx + MODEL_BATCH_SIZE]
        for step in range(MODEL_DIFFUSION_STEPS):
            x = denoise(x, step=step)
        x = run_vae(x)
        video.append(x)
    return concat(video)
\end{minted}
    \caption{Example user base implementation inputted to \sys.}
        \label{fig:ex_user_base_impl}
    \end{subfigure}%
    \hfill
    \begin{subfigure}[t]{0.38\textwidth}
        \vspace{0pt}
        \centering
        \includegraphics[width=0.8\linewidth]{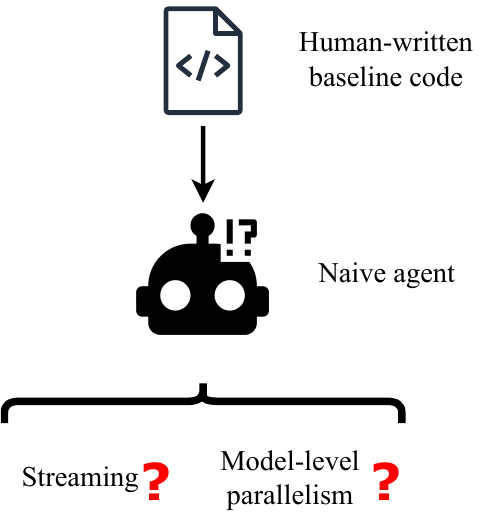}
        \caption{The naive agent finds only a subset of the available optimization axes, and it will not consider composing strategies.}
        \label{fig:naive_agent}
    \end{subfigure}

    \vspace{1em}

    \begin{subfigure}[t]{\textwidth}
        \vspace{0pt}
        \centering
        \includegraphics[width=\linewidth]{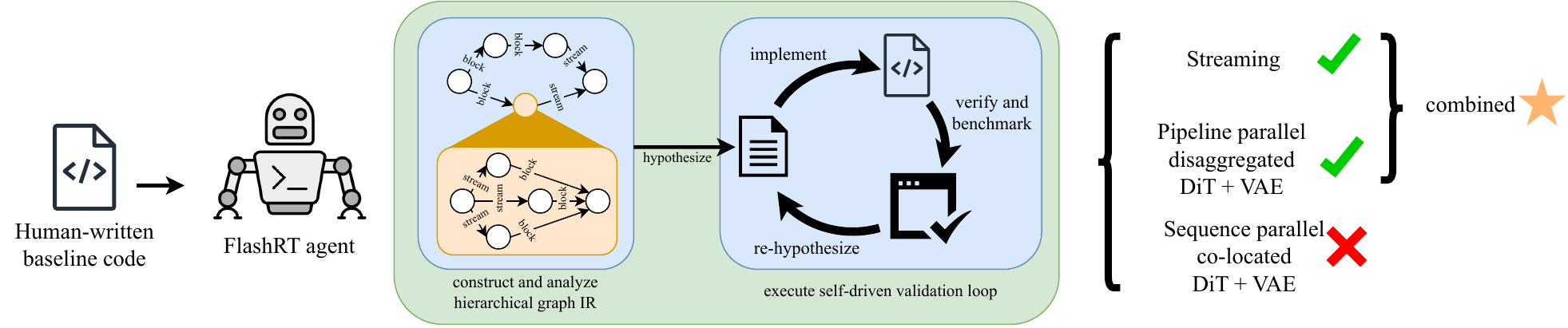}
        \vspace{2pt}
        \caption{\sys transforms the baseline implementation into an IR to perform analysis. Then, it enters a self-driven validation loop to test various hypotheses. This allows the agent to consistently find diverse optimization strategies and consider compositions of strategies that improve the specified target (throughput in this case).}
        \label{fig:flashrt_agent}
    \end{subfigure}

    \caption{Overall view of different agent workflows. (a) shows an example input to the ideal system. (b) shows the result of naively tasking an agent to optimize a base implementation. (c) shows how failure cases are addressed by \sys's agent workflow.}
    \label{fig:agent_workflow}
\end{figure}

Converting a developer-written sequential reference for a multimodal pipeline into an efficient multi-GPU deployment requires reasoning about each application's specific structure, which is a task naturally suited to a coding agent. Unlike rule-based serving systems and auto-parallelism compilers, a coding agent can read an arbitrary reference, reason about its structure, and produce a deployment specialized to the application at hand. However, naively handing the agent a reference and directly tasking it to produce an optimized version can fail. In~\Cref{sec:method:case_study}, we present a case study analysis for several failure modes when performing the naive agent deployment. Then in~\Cref{sec:method:ir,sec:method:iteration}, we describe how the \sys framework addresses these failures, namely through hierarchical planning via an IR and a self-driven validation loop, respectively.

\subsection{Case study}
\label{sec:method:case_study}

We consider a face-to-face conversational agent that generates a talking-head video response to a user's spoken query. The application pipeline consists of the following stages: \textbf{1.} The user’s speech input is processed by an automatic speech recognition (ASR) module to transcribe the audio into text. \textbf{2.} The transcribed text prompt is fed into a large language model (LLM) to generate a textual response. \textbf{3.} The generated text response is passed through a text-to-speech (TTS) model to synthesize response audio. \textbf{4.} The synthesized audio is provided to a sound-to-video (S2V) generative model, which produces a video of a person speaking the response. \textbf{5.} The generated audio and video streams are synchronized and returned to the user as the final response.

\Cref{fig:ex_user_base_impl} shows an example of what a user base implementation (the input to the system), would look like. Concretely, we use Qwen3-ASR~\citep{shi2026qwen3asr} for ASR, Qwen3-4B~\citep{yang2025qwen3} as the LLM, Qwen3-TTS~\citep{hu2026qwen3tts} for TTS, and LiveAvatar~\citep{huang2026liveavatar} for S2V.

We consider the case where the response latency budget is flexible, and the agent must instead optimize for throughput (i.e., frame rate). This objective will naturally prefer deployments that use inter-module streaming and pipeline parallelism. We manually identify two axes for optimization:
\begin{itemize}[itemsep=0.0pt,topsep=0pt,leftmargin=*]
    \item \textit{Application level}: once the LLM response is received, the TTS model can iteratively produce chunks of audio output rather than producing the whole audio at once; the LiveAvatar S2V model is autoregressive and also processes input audio in chunks, meaning that the TTS model can be streamed into the LiveAvatar model. Also, generated chunks can be displayed while future chunks are still being processed, meaning the synced audio + video output can be streamed into a display buffer.
    \item \textit{Model level}: the LiveAvatar model is a video generation model built on a diffusion transformer (DiT), and it consists of DiT and variational autoencoder (VAE) components; since the VAE does not share any internal state (e.g., KV cache) with the DiT, the DiT can be run on an input batch in parallel with the VAE on the previous batch on separate GPUs. Additionally, LiveAvatar is trained such that each diffusion step uses an independent KV cache, so the different steps can be deployed with pipeline parallelism (PP), i.e., one GPU per diffusion step. We formally analyze the latency--throughput implications of these two model-level choices in~\Cref{sec:analysis:app1} (co-locating versus disaggregating the DiT and VAE) and~\Cref{sec:analysis:app2}, which compares sequence parallelism (SP) with PP across diffusion steps.
\end{itemize}
When we provide the agent with a synchronous, single GPU baseline implementation of this pipeline and naively instruct it to find a more efficient deployment, we observe two key failure modes (also highlighted in~\Cref{fig:naive_agent}):
\begin{itemize}[itemsep=0.0pt,topsep=0pt,leftmargin=*]
    \item \textbf{The agent does not recognize any model-level parallelization opportunities, so it fails to address low throughput.} Identifying that the diffusion steps can be pipelined requires reasoning about which parts of the pipeline share persistent state, since the steps are separable precisely because each maintains its own KV cache, as is the VAE because it shares no state with the DiT. An agent that reads the reference as ordinary code does not necessarily carry out such analysis on its own. Ideally, the agent would recognize pipeline parallelism opportunities based on independence of KV caches between diffusion steps and no shared state with the VAE.
    \item \textbf{The agent does not explore diverse optimization patterns and terminates upon discovering simple optimizations.} The agent discovers that it can stream the S2V-generated video chunks into the output buffer, but it overlooks that TTS can also be streamed into S2V for the same reason; the agent also never attempts to combine several strategies into a single deployment. Ideally, the agent should discover multiple optimization axes in a single run, and it should also compose different axes to form the most effective deployment.
\end{itemize}
These failure modes are highly problematic because resolving them requires explicit human intervention to help the agent identify optimization opportunities it previously missed. To resolve this,~\Cref{sec:method:ir,sec:method:iteration} present how we design the \sys agent workflow so that these failure cases are eliminated while maintaining full agent autonomy. These design principles are also visualized in~\Cref{fig:flashrt_agent}. In~\Cref{sec:exp:ablation} we return to these two failure modes and show that removing either component of the \sys workflow reproduces them.

\subsection{Chain-of-Program Paradigm for Hierarchical Planning}

\label{sec:method:ir}

We observe that failure cases where the naive agent fails to recognize important optimization opportunities (e.g., S2V pipeline parallelism) stem from the fact that \textbf{the agent is expected to directly transform a baseline implementation to an effective deployment in one monolithic step}. This differs from how a human would approach the problem: humans employ high-level reasoning to understand end-to-end workflows at varying levels of granularity to design effective implementations. This is analogous to widely adopted chain-of-thought~\citep{wei2023cot} mechanisms that allow language models to reason about a prompt before providing a final response. It is also what motivates the use of intermediate representations in conventional compilers, as it is simpler (and more scalable) to design a compiler that progressively lowers a frontend implementation into a backend over multiple passes. We observe that, by instructing the agent to first convert the baseline implementation to a properly structured intermediate representation (IR) and then convert the IR to a set of candidate deployments, the agent is more robust at discovering all possible axes for efficient implementation. We denote this paradigm as \textbf{chain-of-program}.

\paragraph{IR design.}
The IR represents a pipeline as one or more hierarchical, directed acyclic graphs. Specifically, the IR represents an application workflow using nodes to encapsulate self-contained operations and edges to denote data dependencies. These graphs instantiate the task graph $G=(\mathcal V,E)$ of~\Cref{sec:problem}, with nodes as the computation regions $\mathcal V$ and dataflow edges as the intra-batch data dependencies; the cross-batch state dependencies are supplied by the node annotations below. The IR exposes three structural properties of the pipeline that together drive the subsequent agentic analysis.
\begin{itemize}[itemsep=0.0pt,topsep=0pt,leftmargin=*]
    \item \textbf{Graph hierarchy}: The agent must generate an IR graph in a hierarchical manner, e.g., a top-level graph modeling the overall application workflow and multiple nested graphs specifying intra-module computations. First, a hierarchical graph IR allows the baseline implementation to remain flexible, e.g., it can compose custom model implementations with opaque third-party serving systems. Second, by requiring the agent to model both high-level application graphs and nested intra-module graphs for exposed model runtimes, the agent is forced to consider various levels of granularity for optimization, preventing it from focusing only on high-level application workflow. This hierarchy also sets the granularity of the nodes $\mathcal V$: exposing finer nodes (e.g., a model's internal stages) is what lets a deployment place or parallelize them separately, which a flat, module-level graph would preclude.
    \item \textbf{Node-level state annotation:} For each IR node, the agent is required to specify the set of persistent state reads and writes corresponding to the node's operations (this excludes transient internal state). These reads and writes recover the cross-batch (state) dependencies of~\Cref{sec:problem}, i.e., the $\lambda=1$ edges: two nodes that touch the same persistent state (e.g., a KV cache) are joined by such an edge and must be ordered across batches, whereas nodes with disjoint state carry no such edge and can be disaggregated and executed in parallel. The agent is therefore deterred from missing key opportunities related to disaggregation and pipeline parallelism.
    \item \textbf{Edge-level streaming annotation:} For each data dependency edge between nodes, the agent is required to explicitly mark the edge as either ``blocking" or ``streaming". For example, an LLM $\to$ TTS edge from the example in~\Cref{sec:method:case_study} would be ``blocking", while a TTS $\to$ S2V edge would be ``streaming". Similar to the previous point, these explicit annotations deter the agent from missing critical design opportunities: streaming edges mark where a consumer can begin before its producer finishes, so the two can overlap on separate resources via cross-batch pipelining, whereas blocking edges must serialize and therefore favor co-location.
\end{itemize}
\paragraph{Agentic IR analysis.}
In addition to having the agent translate the user implementation to an IR, the agent is instructed to perform static analysis on the resulting IR. This analysis critically includes several static tools that we provide to the agent; specifically, these tools analyze the agent's generated IR graph and the data dependencies that originate from graph edges and/or shared persistent state to statically identify parallelizable nodes and streaming opportunities; this tool allows the agent's analysis workflow to be more reliable and deterministic in surfacing key deployment design principles that it can try to implement. As a side note, we also provide the agent with an IR interpreter that takes the agent's generated IR graph and executes it sequentially based on topological order; the agent can execute the baseline implementation directly and its generated IR graph with the interpreter on the same sample inputs to compare outputs and can confirm the correctness of the IR it generates.

\subsection{Application-grounded validation loop}
\label{sec:method:iteration}

We find that even with a well-defined candidate space surfaced by IR analysis, an unguided agent may still focus on only one axis of optimization, or it may settle for an inefficient implementation of a strategy it has identified. The first of these is the failure case shown in~\Cref{sec:method:case_study}, where the agent implements one application-level streaming optimization and terminates without considering other axes or their combinations. Many existing works on agentic kernel generation~\citep{ouyang2025kernelbench, liao2026kernelevolve, wei2025astra} and local code optimization~\citep{shypula2024learningperformanceimprovingcodeedits} introduce self-driven validation loops where the agent continuously iterates on a solution, verifies correctness, and benchmarks performance. Critically, while verifying correctness and benchmarking performance is well-defined in these use cases, these are highly application-specific for multi-modal pipelines. Therefore, we propose a self-driven validation loop where, not only does the agent propose hypotheses and implement them in every iteration, but the agent also designs a test harness that is grounded in the user application experience to evaluate key metrics.

\paragraph{Verification and benchmarking.}
In \sys's self-driven loop, each iteration requires the agent to verify correctness of and benchmark its implementation. Concretely, we observe that, although the agent is naturally unable to interact with an application backend via a user-facing frontend, the agent can in fact implement test cases that write simulated input to the backend's input buffers and read corresponding output from its output buffers. This is analogous to how a user-facing frontend would write user interactions to the backend's input buffers and read from its output buffers to display results to the user, effectively grounding the agent's experiments in the true user experience. Through these test cases, the agent can validate correctness by driving the same simulated inputs to the baseline backend and the agent's generated backend; furthermore, the agent can measure end-to-end latency and throughput by collecting timing statistics when writing to input buffers and reading from output buffers. By requiring the agent to perform this structured verification, the agent's self-driven validation loop is strengthened since its hypotheses are grounded in application-specific measurements that mimic real user interactions.

\paragraph{Iteration step.}
In \sys, the agent not only performs self-driven iteration on its solution, but it facilitates the loop using a self-evolving variant queue that expands the diversity of strategies explored and improves the agent's robustness. First, the variant queue is initialized from the IR analyses. Each iteration begins with a hypothesis, i.e., naming the transformation analysis that justifies its legality and the bottleneck it targets. The agent then implements this in an isolated environment and verifies element-wise output equivalence of its implementation against the baseline implementation over a set of sample inputs; if errors are discovered, the agent proceeds to debugging its implementation. After producing a correct implementation, the agent benchmarks its implementation on sample inputs and uses the result to update the queue, appending any new variants the outcome suggests and re-ranking pending ones by expected impact. The loop terminates only when every queued variant has been measured or removed for a documented reason. Through this workflow, the agent is required to fully validate/measure its hypotheses and consider both diversity of implementation design as well as potential for strategy compositions through the reevaluation mechanism.

%% file: sections/experiments.tex
\vspace{-5pt}
\section{Experiments}
\label{sec:exp}

We evaluate \sys on a diverse set of real-time multimodal applications and answer five questions: \textbf{(1)} Can the agent generate highly efficient implementations that deliver high throughput at low time-to-first-output (TTFO) latency? \textbf{(2)} Is the agent able to generalize to a variety of real-time multi-modal applications? \textbf{(3)} Does the agent framework generalize across hardware? \textbf{(4)} Which parts of the harness are responsible for its effectiveness? \textbf{(5)} How robust is the result to the agent configuration and to run-to-run variation? We organize the experiments as follows:
\begin{itemize}[itemsep=0.0pt,topsep=0pt,leftmargin=*]
    \item In~\Cref{sec:exp:liveavatar}, we present a case study, matching the application studied in~\Cref{sec:method:case_study}, of how the agent is able to compose several strategies to achieve \textbf{$\sim$70$\times$} latency reduction from the baseline and high theoretical frame rate.
    \item In~\Cref{sec:exp:generalization}, we evaluate the agent on four additional applications, each with different compute structures. We find that the agent continues to deliver principled deployments that can jointly reduce latency while boosting throughput.
    \item In~\Cref{sec:exp:hardware}, we execute the agent pipeline separately on AMD MI355X for the same set of applications and find that it recovers the same class of efficient deployments, and the same latency and frame-rate tradeoffs, that it finds on NVIDIA B200.
    \item In~\Cref{sec:exp:ablation}, we ablate the two components of the harness and find that each is load-bearing on the agent's ability to produce efficient deployments.
    \item In~\Cref{sec:exp:robustness}, we measure how the \sys harness behaves under a different agent configuration and across independent runs.
\end{itemize}

\subsection{Setup}
\label{sec:exp:setup}

\paragraph{Hardware.} Unless otherwise noted, experiments are run on a single node with 8 NVIDIA B200 GPUs;~\Cref{sec:exp:hardware} additionally evaluates the full pipeline on a node with 8 AMD MI355X GPUs. The number of GPUs allocated to the agent varies across experiments (between 1 and 8), reflecting the different scale at which we evaluate each pipeline. We report the GPU count for each result.

\paragraph{Agent configuration.} Unless otherwise noted, we use Anthropic Claude Code~\citep{anthropic2026claudecode} with Claude Opus 4.8~\citep{anthropic2026opus48news}, using adaptive thinking with \texttt{effort=max}. The agent runs in an isolated workspace with Claude Code's Auto permission mode enabled, which permits unattended tool use subject to Claude Code's automated action classifier. Each session starts from scratch with no memory of previous runs and no visibility into runs on other applications. The starting prompt only specifies the application to optimize and the GPU budget; it provides no hints about possible deployment strategies. Each application is supplied with a synchronous, single-GPU reference implementation that performs no inter-component streaming, and the agent is responsible for self-discovering all deployment patterns.~\Cref{sec:exp:robustness} additionally evaluates a second agent configuration, OpenAI Codex~\citep{openai2026codexcli} with GPT-5.6 Sol~\citep{openai2026gpt56sol}.

\subsection{Case Study: Face-to-Face Conversational Agent}
\label{sec:exp:liveavatar}

\begin{table}[t]
\centering
\caption{
Comparison between the sequential user baseline and several deployments proposed by the \sys agent for face-to-face conversational agent application. Frame rate is omitted for the sequential baseline because the full output video is generated offline and becomes
playable only after completion. \sys deployments can produce frames at a higher FPS than the videos the S2V model is trained on, so their frame rate is a theoretical sustainable rate rather than the actual playback rate.
}
\label{tab:liveavatar}

\setlength{\tabcolsep}{6pt}
\renewcommand{\arraystretch}{1.15}

\small
\begin{tabular}{c l c c}
\toprule
\textbf{\# GPUs} &
\textbf{Deployment} &
\textbf{Latency (s)} $\boldsymbol{\downarrow}$ &
\textbf{Frame rate (FPS)} $\boldsymbol{\uparrow}$ \\
\midrule

1 &
Baseline (sequential, no streaming)
& 117.10
& -- \\

2 &
\sys~(streaming)
& \textbf{1.69}
& 30.92 \\

6 &
\sys~(streaming + S2V PP)
& 1.72
& \textbf{133.48} \\

\bottomrule
\end{tabular}
\vspace{-5pt}
\end{table}

\begin{table}[t]
\centering
\caption{Placement $\rho$ for the LiveAvatar conversational-agent deployments. The sequential baseline (omitted) places all nodes on one GPU. At 6 GPUs, the S2V DiT's four denoising steps (DiT$_1$--DiT$_4$) are pipelined one per GPU (the same per-step decomposition of~\Cref{sec:analysis:app2}), while the ASR shares a GPU with the VAE and the LLM shares a GPU with the TTS.}
\label{tab:place_liveavatar}
\small
\setlength{\tabcolsep}{4pt}
\renewcommand{\arraystretch}{1.1}
\begin{tabular}{@{}lccccc@{\hspace{0.6em}}!{\vrule}@{\hspace{0.6em}}cccccccc@{}}
\toprule
& \multicolumn{5}{c}{\textbf{2 GPU (streaming)}} & \multicolumn{8}{c}{\textbf{6 GPU (streaming + S2V PP)}} \\
\cmidrule(lr){2-6} \cmidrule(lr){7-14}
GPU & ASR & LLM & TTS & DiT & VAE & ASR & LLM & TTS & DiT$_1$ & DiT$_2$ & DiT$_3$ & DiT$_4$ & VAE \\
\midrule
G0 & X &  &  & X & X &  &  &  & X &  &  &  &  \\
G1 &  & X & X &  &  &  &  &  &  & X &  &  &  \\
G2 & \gcell & \gcell & \gcell & \gcell & \gcell &  &  &  &  &  & X &  &  \\
G3 & \gcell & \gcell & \gcell & \gcell & \gcell &  &  &  &  &  &  & X &  \\
G4 & \gcell & \gcell & \gcell & \gcell & \gcell & X &  &  &  &  &  &  & X \\
G5 & \gcell & \gcell & \gcell & \gcell & \gcell &  & X & X &  &  &  &  &  \\
G6 & \gcell & \gcell & \gcell & \gcell & \gcell & \gcell & \gcell & \gcell & \gcell & \gcell & \gcell & \gcell & \gcell \\
G7 & \gcell & \gcell & \gcell & \gcell & \gcell & \gcell & \gcell & \gcell & \gcell & \gcell & \gcell & \gcell & \gcell \\
\bottomrule
\end{tabular}
\end{table}

We evaluate the \sys agent on the same face-to-face conversational agent application presented in~\Cref{sec:method:case_study}. We provide the agent a sequential baseline, structured like the pseudocode shown in~\Cref{fig:ex_user_base_impl}.

Our results are presented in~\Cref{tab:liveavatar}, and the multi-GPU deployments' node-to-GPU placements are shown in~\Cref{tab:place_liveavatar}. We find these results to be quite impressive:
\begin{itemize}[itemsep=0.0pt,topsep=0pt,leftmargin=*]
    \item With two GPUs, \sys reduces TTFO latency by \textbf{$\sim$70$\times$}. The agent starts S2V inference on intermediate TTS audio chunks as soon as they become available rather than waiting for the full TTS audio, and it streams each generated video chunk to the output buffer while later chunks are still being produced, so the first frame is emitted long before the response has been fully rendered. Its placement co-locates the ASR with the S2V DiT and VAE, and the LLM with the TTS: the ASR completes before the LLM starts, and the LLM completes before the TTS $\to$ S2V $\to$ output pipeline begins, so these components never contend for the same GPU, meaning that disaggregating them does not achieve any additional benefit.
    \item To raise frame rate, the agent expands to six GPUs and applies pipeline parallelism to the S2V model (which was noted as a legal strategy for LiveAvatar S2V in~\Cref{sec:method:case_study}, and which~\Cref{sec:analysis:app2} shows sustains at least as high a frame rate as sequence parallelism when sequence-parallel speedup is sublinear). This provides a \textbf{4.3$\times$} frame rate improvement over the 2-GPU variant with minimal impact to latency.
\end{itemize}
Ultimately, \sys is able to design a highly efficient implementation that reduces latency from the user's sequential baseline by \textbf{$\sim$70$\times$} while sustaining an extremely high theoretical frame rate that allows real-time playability.

\subsection{Generalization to other applications}
\label{sec:exp:generalization}

We evaluate on a set of four additional applications and demonstrate that \sys can generalize to diverse settings while still producing efficient deployments.

\subsubsection{Qwen3-Omni}
\label{sec:exp:qwen3omni}

\begin{table}[t]
\centering
\caption{
Comparison between \sys and the hand-engineered
vLLM-Omni deployment for serving Qwen3-Omni with 3 GPUs.
\sys achieves lower latency while maintaining a Real-Time Factor
(RTF) below 1, enabling real-time audio generation after the
initial response latency.
}
\label{tab:qwen3omni}

\setlength{\tabcolsep}{6pt}
\renewcommand{\arraystretch}{1.15}

\small
\begin{tabular}{l c c c}
\toprule
\textbf{Deployment} &
\textbf{\# GPUs} &
\textbf{Latency (s)} $\boldsymbol{\downarrow}$ &
\textbf{RTF $< 1$} \\
\midrule

Sequential (no streaming)
& 1
& 42.713
& \checkmark \\

vLLM-Omni
& 3
& 0.433
& \checkmark \\

\sys
& 3
& \textbf{0.323}
& \checkmark \\

\bottomrule
\end{tabular}
\end{table}

Qwen3-Omni~\citep{xu2025qwen3omni} is a natively multimodal model that responds to a user prompt through three components: a Thinker LLM that produces a text response, a Talker LLM that converts the response into audio codec tokens, and a Vocoder that synthesizes the codec tokens into audio. vLLM-Omni~\citep{yin2026vllmomni} explicitly implements disaggregation and inter-stage streaming for this model. We provide the \sys agent a sequential baseline that independently serves each component and coordinates them synchronously, and we find that the \sys agent automatically discovers the disaggregation structure and streaming logic on its own.

As shown in~\Cref{tab:qwen3omni}, \sys achieves \textbf{25\% latency reduction} compared to vLLM-Omni. This is achieved by using lighter-weight inter-component data transfer compared to vLLM-Omni's more generalized implementation.~\Cref{tab:qwen3omni} also measures Real-Time Factor (RTF), the ratio between end-to-end (E2E) generation time and length of generated audio; \sys maintains RTF $< 1$, meaning that its output remains real-time while still delivering lower latency.

\subsubsection{Video Background Editor (Krea-Realtime + SAM 3)}
\label{sec:exp:krea_sam3}

In this application, the user provides a continuous webcam video stream. The video is processed in two paths: Krea-Realtime~\citep{millon2025krea}, an autoregressive video model that supports video-to-video (V2V) generation, performs style transfer on the stream; Segment Anything Model (SAM) 3~\citep{carion2026sam3} segments the user's body per frame. The two outputs are combined by replacing the V2V output with the original webcam pixels at all body locations, producing a video where only the background is restyled. The V2V and SAM 3 paths are independent until the final composition step and can therefore execute in parallel; within Krea-Realtime, the DiT can be sequence-parallelized; like WorldPlay, the DiT and VAE can be co-located or disaggregated.

\begin{table}[t]
\centering
\caption{
Comparison of video background editor deployments
discovered by the agent against the sequential baseline.
}
\label{tab:krea_sam3}

\setlength{\tabcolsep}{6pt}
\renewcommand{\arraystretch}{1.15}

\small
\begin{tabular}{l c c c}
\toprule
\textbf{Deployment} &
\textbf{\# GPUs} &
\textbf{Latency (ms)} $\boldsymbol{\downarrow}$ &
\textbf{Frame rate (FPS)} $\boldsymbol{\uparrow}$ \\
\midrule

Baseline (sequential)
& 1
& 1715
& 6.82 \\

\sys (parallel)
& 2
& \textbf{1014}
& \textbf{11.54} \\

\midrule

\sys~(latency-optimized)
& 4
& \textbf{491}
& 17.18 \\

\sys~(frame-rate-optimized)
& 5
& 517
& \textbf{19.41} \\

\bottomrule
\end{tabular}
\end{table}

\begin{table}[t]
\centering
\caption{Placement $\rho$ for the video background editor deployments. The latency-optimized deployment co-locates the DiT and VAE on one SP-3 group, with SAM~3 on a 4th GPU; the frame-rate-optimized one moves the VAE onto its own GPU, taking the total to 5. SAM~3 runs on its own GPU throughout.}
\label{tab:place_krea}
\small
\setlength{\tabcolsep}{4pt}
\renewcommand{\arraystretch}{1.1}
\begin{tabular}{@{}lP{2.9em}P{2.9em}P{2.9em}@{\hspace{0.6em}}!{\vrule}@{\hspace{0.6em}}P{2.9em}P{2.9em}P{2.9em}@{}}
\toprule
& \multicolumn{3}{c}{\textbf{Latency-optimized}} & \multicolumn{3}{c}{\textbf{Frame-rate-optimized}} \\
\cmidrule(lr){2-4} \cmidrule(lr){5-7}
GPU & DiT & VAE & SAM\,3 & DiT & VAE & SAM\,3 \\
\midrule
G0 & X & X &  & X &  &  \\
G1 & X & X &  & X &  &  \\
G2 & X & X &  & X &  &  \\
G3 &  &  & X &  & X &  \\
G4 & \gcell & \gcell & \gcell &  &  & X \\
\bottomrule
\end{tabular}
\end{table}

Results are reported in~\Cref{tab:krea_sam3}, with the placements visualized in~\Cref{tab:place_krea}. Importantly, the agent automatically chooses to deploy SAM 3 on a separate GPU from the V2V model. The agent then produces a latency-optimized and a frame-rate-optimized variant: \textbf{(1) latency-optimized}: the DiT and VAE are co-located with SP-3, reducing latency by \textbf{3.5$\times$} over the baseline; \textbf{(2) frame-rate-optimized}: the agent uses SP-3 for DiT and places the VAE on a separate GPU to enable pipeline parallelism, resulting in \textbf{2.8$\times$} higher frame-rate compared to the baseline.

\subsubsection{Video World Model (WorldPlay)}
\label{sec:exp:worldplay}

WorldPlay is an autoregressive video world model: the user issues keyboard actions (WASD or arrow keys), and the model renders these actions as movement in a generated video stream. We use HY-WorldPlay 5B~\citep{sun2026worldplay}, which decomposes into an autoregressive DiT and a streaming VAE. The DiT and VAE share no persistent state, so they can be disaggregated on separate GPUs and pipelined across batches; alternatively, they can be co-located and sequence-parallelized across a shared group.

\begin{table}[t]
\centering
\caption{
Deployment comparisons for WorldPlay at 1 and 2 GPUs.
The agent leverages disaggregation, co-location, and sequence
parallelism to construct deployments that improve latency
and/or frame rate.
}
\label{tab:worldplay}

\setlength{\tabcolsep}{6pt}
\renewcommand{\arraystretch}{1.15}

\small
\begin{tabular}{c l c c}
\toprule
\textbf{\# GPUs} &
\textbf{Deployment} &
\textbf{Latency (ms)} $\boldsymbol{\downarrow}$ &
\textbf{Frame rate (FPS)} $\boldsymbol{\uparrow}$ \\
\midrule

1 &
Baseline (sequential) &
869 & 20.4 \\

\midrule

\multirow{2}{*}{2}
& \sys~(frame-rate-optimized)
& 625
& \textbf{31.0} \\

& \sys~(latency-optimized)
& \textbf{493}
& 25.5 \\

\bottomrule
\end{tabular}
\end{table}

\begin{table}[t]
\centering
\caption{Placement $\rho:\mathcal V\to\mathcal R$ for the latency- and frame-rate-optimized WorldPlay deployments on 2 GPUs. The latency-optimized deployment co-locates the DiT and VAE on one SP-2 group (short critical path); the frame-rate-optimized one disaggregates them onto separate GPUs and pipelines across batches (same tradeoff shown in~\Cref{sec:analysis:app1}).}
\label{tab:place_worldplay}
\small
\setlength{\tabcolsep}{4pt}
\renewcommand{\arraystretch}{1.1}
\begin{tabular}{@{}lP{3.6em}P{3.6em}@{\hspace{0.6em}}!{\vrule}@{\hspace{0.6em}}P{3.6em}P{3.6em}@{}}
\toprule
& \multicolumn{2}{c}{\textbf{Latency-optimized}} & \multicolumn{2}{c}{\textbf{Frame-rate-optimized}} \\
\cmidrule(lr){2-3} \cmidrule(lr){4-5}
GPU & DiT & VAE & DiT & VAE \\
\midrule
G0 & X & X & X &  \\
G1 & X & X &  & X \\
\bottomrule
\end{tabular}
\end{table}

Results are reported in~\Cref{tab:worldplay}, with the node-to-GPU placements of the two 2-GPU deployments visualized in~\Cref{tab:place_worldplay}. We find that the agent employs both disaggregation of DiT and VAE as well as co-location with sequence parallelism to design deployments that tackle both latency and output frame rate. For higher frame rate, the agent chooses to disaggregate the DiT and VAE: this enables pipelining and executing them in parallel on separate GPUs to reduce inter-batch latency. For lower latency, the agent chooses to instead co-locate the DiT and VAE on the same set of GPUs: this enables a higher SP degree within the same GPU budget to reduce the critical path latency. Notably, this is also the same as reducing the inter-batch latency, so frame rate also improves. These two choices instantiate~\Cref{sec:analysis:app1}, which proves that for an actions $\to$ DiT $\to$ VAE pipeline, co-location minimizes the latency-binding critical path while disaggregation can sustain a higher frame rate.

The agent proposes further variants that scale these strategies to larger GPU budgets, reaching up to \textbf{51\%} lower latency and \textbf{2.2$\times$} the baseline frame rate; we present these deployments and their analysis in~\Cref{sec:supp:worldplay}.

\subsubsection{Video Narrator (LongLive)}
\label{sec:exp:longlive}

In this application, the user narrates a video in real time: the user speaks prompts that an automatic speech recognition (ASR) model transcribes into text, and an autoregressive video model renders a continuous video stream that updates to reflect each new prompt. We use LongLive-2.0-5B~\citep{chen2026longlive2} for video generation, which decomposes into an autoregressive DiT and a streaming VAE. The ASR shares no persistent state with the video model and supplies a new prompt only when the user speaks, so it can be placed on a separate GPU and run asynchronously to overlap transcription with generation. Like WorldPlay, the DiT can be sequence-parallelized, and the DiT and VAE can be co-located and sequence-parallelized or disaggregated onto separate GPUs and pipelined across batches, with the VAE additionally tiled across GPUs.

\begin{table}[!ht]
\centering
\caption{
Deployment comparisons for the LongLive video narrator at 1 and 2 GPUs. At a given GPU budget the
agent offers a latency-optimized deployment (co-located DiT and VAE at a higher
SP degree) and a frame-rate-optimized deployment (DiT and VAE
disaggregated and pipelined across GPUs), trading the two targets against each other.
}
\label{tab:longlive}

\setlength{\tabcolsep}{6pt}
\renewcommand{\arraystretch}{1.15}

\small
\begin{tabular}{c l c c}
\toprule
\textbf{\# GPUs} &
\textbf{Deployment} &
\textbf{Latency (ms)} $\boldsymbol{\downarrow}$ &
\textbf{Frame rate (FPS)} $\boldsymbol{\uparrow}$ \\
\midrule

1 &
Baseline (sequential) &
674 &
25.8 \\

\midrule

\multirow{2}{*}{2}
& \sys~(latency-optimized)
& \textbf{512}
& 36.5 \\

& \sys~(frame-rate-optimized)
& 630
& \textbf{47.8} \\

\bottomrule
\end{tabular}
\end{table}

Results are reported in~\Cref{tab:longlive}. As in WorldPlay, at a given GPU budget the agent offers a latency-optimized and a frame-rate-optimized deployment that trade the two targets against each other. For lower latency it co-locates the DiT and VAE, which affords a higher SP degree within the budget and shortens the serial per-chunk denoising critical path; for higher frame rate it instead disaggregates the DiT and VAE onto separate GPUs so that the VAE decode of one chunk overlaps the DiT computation of the next. At 2 GPUs this is the choice between co-located SP-2 and a disaggregated DiT/VAE pipeline, which trades \textbf{23}\% higher latency for \textbf{31}\% higher frame rate. The agent proposes further variants that scale these strategies to larger GPU budgets, reaching \textbf{1.6$\times$} lower latency and \textbf{2.6$\times$} the baseline frame rate; we present these deployments and their analysis in~\Cref{sec:supp:longlive}.

\begin{table}[H]
\centering
\caption{
Deployments discovered by the \sys agent on AMD MI355X for all five applications, re-optimized from scratch under the same references and GPU budgets used for B200.
}
\label{tab:mi355x}
\small
\setlength{\tabcolsep}{6pt}
\renewcommand{\arraystretch}{1.1}
\begin{subtable}[t]{\linewidth}
\centering
\caption{Face-to-face conversational agent (LiveAvatar)}
\label{tab:mi355x_la}
\begin{tabular}{c l c c}
\toprule
\textbf{\# GPUs} & \textbf{Deployment} & \textbf{Latency (s)} $\boldsymbol{\downarrow}$ & \textbf{Frame rate (FPS)} $\boldsymbol{\uparrow}$ \\
\midrule
1 & Baseline (sequential, no streaming) & 101.17 & -- \\
3 & \sys~(streaming) & 1.59 & 37.76 \\
8 & \sys~(streaming + S2V PP + SP) & \textbf{1.47} & \textbf{78.61} \\
\bottomrule
\end{tabular}
\end{subtable}

\vspace{0.7em}

\begin{subtable}[t]{\linewidth}
\centering
\caption{Multimodal LLM (Qwen3-Omni)}
\label{tab:mi355x_qwen}
\begin{tabular}{c l c c}
\toprule
\textbf{\# GPUs} & \textbf{Deployment} & \textbf{Latency (s)} $\boldsymbol{\downarrow}$ & \textbf{RTF $<1$} \\
\midrule
1 & Sequential (no streaming) & 29.66 & \checkmark \\
3 & vLLM-Omni & 0.779 & \checkmark \\
3 & \sys & \textbf{0.276} & \checkmark \\
\bottomrule
\end{tabular}
\end{subtable}

\vspace{0.7em}

\begin{subtable}[t]{\linewidth}
\centering
\caption{Video background editor (Krea-Realtime + SAM~3)}
\label{tab:mi355x_krea}
\begin{tabular}{c l c c}
\toprule
\textbf{\# GPUs} & \textbf{Deployment} & \textbf{Latency (ms)} $\boldsymbol{\downarrow}$ & \textbf{Frame rate (FPS)} $\boldsymbol{\uparrow}$ \\
\midrule
1 & Baseline (sequential) & 2393 & 5.40 \\
4 & \sys~(latency-optimized) & \textbf{612} & 7.93 \\
5 & \sys~(frame-rate-optimized) & 646 & \textbf{8.90} \\
\bottomrule
\end{tabular}
\end{subtable}

\vspace{0.7em}

\begin{subtable}[t]{\linewidth}
\centering
\caption{Video world model (WorldPlay)}
\label{tab:mi355x_wp}
\begin{tabular}{c l c c}
\toprule
\textbf{\# GPUs} & \textbf{Deployment} & \textbf{Latency (ms)} $\boldsymbol{\downarrow}$ & \textbf{Frame rate (FPS)} $\boldsymbol{\uparrow}$ \\
\midrule
1 & Baseline (sequential) & 1102 & 15.6 \\
\midrule
2 & \sys~(latency-optimized) & \textbf{576} & 19.8 \\
2 & \sys~(frame-rate-optimized) & 723 & \textbf{29.5} \\
\midrule
4 & \sys~(latency-optimized) & \textbf{320} & 31.8 \\
8 & \sys~(frame-rate-optimized) & 326 & \textbf{56.8} \\
\bottomrule
\end{tabular}
\end{subtable}

\vspace{0.7em}

\begin{subtable}[t]{\linewidth}
\centering
\caption{Video narrator (LongLive)}
\label{tab:mi355x_ll}
\begin{tabular}{c l c c}
\toprule
\textbf{\# GPUs} & \textbf{Deployment} & \textbf{Latency (ms)} $\boldsymbol{\downarrow}$ & \textbf{Frame rate (FPS)} $\boldsymbol{\uparrow}$ \\
\midrule
1 & Baseline (sequential) & 784 & 18.9 \\
\midrule
2 & \sys~(latency-optimized) & \textbf{660} & 20.4 \\
2 & \sys~(frame-rate-optimized) & 907 & \textbf{30.5} \\
\midrule
4 & \sys~(latency-optimized) & \textbf{343} & 38.8 \\
8 & \sys~(frame-rate-optimized) & 442 & \textbf{56.5} \\
\bottomrule
\end{tabular}
\end{subtable}
\end{table}

\FloatBarrier

\subsection{Generalization across hardware}
\label{sec:exp:hardware}

The experiments so far run on NVIDIA B200 GPUs. To test whether \sys generalizes across accelerators, we separately execute the full agent pipeline on AMD MI355X for the same set of applications. Other than the hardware, the protocol matches~\Cref{sec:exp:setup} in every respect: the agent is given the same reference implementations, the same GPU budgets, and no hardware-specific hints, and it optimizes each application from scratch (the agent is provided no information on the B200 runs).~\Cref{tab:mi355x} reports the deployments it discovers across all five applications.

On MI355X, \sys reduces latency by up to $\sim$\textbf{70$\times$} and improves throughput by up to \textbf{3.6$\times$}, while beating the hand-engineered vLLM-Omni baseline on Qwen3-Omni by \textbf{65\%}. Across all five applications the agent recovers the same deployment families it found on B200 and reproduces the central latency/frame-rate tradeoff, in which co-location minimizes latency and disaggregation raises frame rate. On WorldPlay, at 2 GPUs co-location reaches the lower latency (576 ms) and disaggregation the higher frame rate (29.5 FPS), the same split the agent makes on B200. On Qwen3-Omni the agent again outperforms the vLLM-Omni deployment (0.276 s versus 0.779 s) while keeping RTF below 1 (\Cref{tab:mi355x_qwen}), meaning the output streams in real-time.

The agent also adapts its choices to the hardware. MI355X's high-degree co-location is strong enough that WorldPlay and LongLive reach \emph{lower} best latency than on B200 (320 ms and 343 ms). On the conversational agent application, the \sys agent selects a different decomposition of the S2V model than it does on B200: per-step pipelining admits only four stages, one per denoising step, and therefore cannot place more than four GPUs on that model; the agent instead merges steps into two pipeline stages and applies SP-3 to each stage, occupying the full 8-GPU budget. Notably, the agent observes that, under a matched streaming configuration, this deployment sustains roughly 30\% higher frame rate than per-step pipelining reaches at six GPUs; composing it with full streaming produces the deployment in~\Cref{tab:mi355x_la} that is best on both metrics. Its streaming deployment also gives the TTS its own GPU instead of sharing one with the LLM; as in~\Cref{sec:exp:liveavatar}, these components never execute concurrently, so this is a difference in placement rather than in performance. The one application whose frame rate does not scale as on B200 is the video background editor, where the pipeline is bound by the SAM~3 segmentation path on MI355X and disaggregating the VAE yields little additional throughput. On MI355X the agent generally produces an efficient deployment for each application and makes the same latency and frame-rate tradeoffs it does on B200.

\FloatBarrier

\subsection{Ablation study}
\label{sec:exp:ablation}

The previous experiments directly evaluate the deployments the agent produces. We now examine the harness itself to identify which of its components are responsible for the results. This study is conducted on an 8-GPU NVIDIA B200 node and uses the face-to-face conversational agent from~\Cref{sec:exp:liveavatar}, the application in our set that admits the richest combination of strategies (chunked TTS $\to$ S2V streaming, streaming of generated video to the output buffer, and pipeline parallelism across the S2V denoising steps).

\begin{table}[t]
\centering
\caption{
Ablation of the two components of the \sys workflow on the face-to-face conversational agent. Each configuration receives the same starting prompt, the same 8-GPU budget, and the same requirement to verify correctness. We report the best deployment each ablation produces. Removing either component results in the agent either overlooking an optimization axis or failing to develop an efficient implementation of it.
}
\label{tab:ablation}
\setlength{\tabcolsep}{4pt}
\renewcommand{\arraystretch}{1.15}
\small
\begin{tabular}{l c c l c c c}
\toprule
\textbf{Configuration} &
\textbf{IR pass} &
\textbf{Loop} &
\textbf{Best deployment} &
\textbf{\# GPUs} &
\textbf{Latency (s)} $\boldsymbol{\downarrow}$ &
\textbf{Frame rate (FPS)} $\boldsymbol{\uparrow}$ \\
\midrule
Reference & -- & -- & Sequential baseline & 1 & 117.10 & -- \\
\midrule
Naive & $\times$ & $\times$ & S2V output streaming & 2 & 10.45 & 30.99 \\
No IR & $\times$ & \checkmark & Full streaming & 2 & 1.68 & 30.67 \\
No Loop & \checkmark & $\times$ & Full streaming + S2V PP & 6 & 2.37 & 44.69 \\
\sys & \checkmark & \checkmark & Full streaming + S2V PP & 6 & \textbf{1.72} & \textbf{133.48} \\
\bottomrule
\end{tabular}
\end{table}

We remove each of the two components of the workflow, the IR pass and the validation loop. The \emph{Naive} configuration removes both and generically instructs the agent to develop optimized deployments; \emph{No IR} keeps only the loop; \emph{No Loop} keeps only the IR pass and then instructs the agent to generically develop efficient deployments; and the full \sys configuration keeps both components.~\Cref{tab:ablation} reports the best deployment produced by each, and~\Cref{sec:supp:analysis:ablation} provides a more detailed analysis of the variants each configuration measured. Notably, removing each of the two \sys components fails in different ways, and the pattern matches the two failure modes identified in~\Cref{sec:method:case_study}:
\begin{itemize}[itemsep=0.0pt,topsep=0pt,leftmargin=*]
    \item \textbf{Without the IR pass, the agent does not surface the pipeline-parallel axis at all.} The Naive configuration finds only that generated video can be streamed to the output buffer, never that the TTS can be streamed into the S2V model, and fails to address low frame rate. Adding the loop (No IR) recovers the missing streaming edge, because iteration keeps the agent exploring after its first success, and the resulting deployment attains the same $\sim$70$\times$ latency reduction as the full system. Neither configuration ever proposes pipeline parallelism, so both remain near 31 FPS. Pipeline parallelism is discovered only in the two configurations that build the IR, where the annotation of per-step KV caches as disjoint persistent state makes the denoising steps visibly independent.
    \item \textbf{Without the loop, the agent surfaces the pipeline-parallel axis but does not turn it into an efficient implementation.} The No Loop configuration finds both full streaming and S2V pipeline parallelism during IR analysis and directly attempts to implement a combined variant. It implements pipeline parallelism by orchestrating the four steps within a single process, so every synchronization event blocks all steps rather than only the steps involved, and it terminates at this prototype implementation with 44.69 FPS. The full \sys system instead isolates each strategy as its own variant before composing them: iterating on the pipeline-parallel deployment produces a multi-process implementation with non-blocking NCCL transfers between steps. Composing that implementation with streaming yields \textbf{3.0$\times$} the frame rate of the No Loop deployment at \textbf{1.4$\times$} lower latency.
\end{itemize}
The two components are therefore complementary: the IR pass determines which optimization axes the agent targets, and the loop determines how well it exploits them and composes them into optimized deployments.

\subsection{Robustness}
\label{sec:exp:robustness}

The deployments reported so far come from a single agent configuration, with only a single run per application. To measure the robustness of the \sys framework, we conduct two additional studies to evaluate modified agent configurations and reliability across independent runs. Both studies below rerun the face-to-face conversational agent of~\Cref{sec:exp:liveavatar} on the 8-GPU NVIDIA B200 node.

\begin{table}[t]
\centering
\caption{
Robustness of the harness on the face-to-face conversational agent application. In both studies, the agent is instructed to optimize the application given a full 8-GPU node.
}
\label{tab:robustness}
\small
\setlength{\tabcolsep}{3pt}
\renewcommand{\arraystretch}{1.15}
\begin{subtable}[t]{\linewidth}
\centering
\caption{Underlying coding agent and model}
\label{tab:robustness_model}
\begin{tabular}{l l c c c}
\toprule
\textbf{Agent} & \textbf{Best deployment} & \textbf{\# GPUs} & \textbf{Latency (s)} $\boldsymbol{\downarrow}$ & \textbf{Frame rate (FPS)} $\boldsymbol{\uparrow}$ \\
\midrule
Claude Code + Opus 4.8 & Full streaming + S2V PP & 6 & 1.72 & 133.48 \\
Codex + GPT-5.6 Sol & Full streaming + S2V PP + VAE SP & 8 & 1.77 & 131.98 \\
\bottomrule
\end{tabular}
\end{subtable}

\vspace{0.7em}

\begin{subtable}[t]{\linewidth}
\centering
\caption{Independent runs of the same configuration}
\label{tab:robustness_repro}
\begin{tabular}{c l c c c}
\toprule
\textbf{Run} & \textbf{Best deployment} & \textbf{\# GPUs} & \textbf{Latency (s)} $\boldsymbol{\downarrow}$ & \textbf{Frame rate (FPS)} $\boldsymbol{\uparrow}$ \\
\midrule
1 & Full streaming + S2V PP & 6 & 1.72 & 133.48 \\
2 & Full streaming + S2V PP & 8 & 1.63 & 133.07 \\
3 & Full streaming + S2V PP & 8 & 1.78 & 125.30 \\
\bottomrule
\end{tabular}
\end{subtable}
\end{table}

\paragraph{Sensitivity to the underlying model.}
To observe the effects of modifying the agent configuration under the same \sys harness, we conduct an additional run to optimize the conversational agent application using OpenAI Codex~\citep{openai2026codexcli} with GPT-5.6 Sol~\citep{openai2026gpt56sol}, under the same prompt and the same 8-GPU budget.~\Cref{tab:robustness_model} shows that the modified agent configuration discovers both key optimizations (full streaming and S2V pipeline parallelism), and reaches essentially the same latency and frame rate metrics. The only observed difference is that GPT-5.6 chooses to shard the VAE with sequence parallelism, so its deployment occupies 8 GPUs instead of 6; this choice changes neither metric, as frame rate is capped by the latency of a single denoising step, which already exceeds that of the single-rank VAE, and the added multi-process coordination only slightly increases latency.~\Cref{sec:supp:analysis:model} provides a more detailed explanation of the other variants explored in this run. Ultimately, the design principles of \sys are not tailored to one coding agent or model.

\paragraph{Consistency across runs.}
To observe the robustness of the agent across independent runs, we repeat the experiment from~\Cref{sec:exp:liveavatar} a total of three times to see if all runs arrive at the same solution.~\Cref{tab:robustness_repro} shows that all three runs converge on the same deployment, i.e., full streaming composed with S2V pipeline parallelism, within 1.63--1.78 s of latency and 125.30--133.48 FPS. In \Cref{sec:supp:analysis:repro}, we provide a more detailed analysis of the variants explored in each run, and we importantly observe that all three runs follow the same exploration path: each proposes a streaming-only variant, then a pipeline-parallel-only variant, then a composition of the two. The runs differ only in placement: runs 2 and 3 disaggregate the ASR, LLM, TTS, and VAE over four GPUs for a total of 8, where run 1 pairs them onto two. This has little effect, since (as noted in~\Cref{sec:exp:liveavatar}) these components never execute concurrently with one another.

%% file: sections/conclusion.tex
\section{Conclusion}
\label{sec:conclusion}

In this paper, we present \sys, an agent harness for using a coding agent to flexibly deploy real-time, multimodal interactive applications. Our agentic solution is motivated by the intractability of deriving efficient deployments from arbitrary application specifications and the promise of recent coding agents at using higher-level reasoning to perform effective code synthesis and optimization. We discover that, by instilling an agent with a properly structured workflow---specifically consisting of an IR conversion step to make data dependencies and streaming behaviour explicit, as well as a self-driven validation loop to systematically encourage diverse and principled agentic exploration---agents are capable of converting simple, human-authored application implementations into highly efficient deployments. Our results demonstrate that \sys is able to flexibly optimize for various targets such as throughput or latency, and the deployments it generates significantly outperform simple handwritten baselines while also being generally on par or more performant than solutions proposed by other serving systems or expert engineers.

\sys is also inexpensive to run: producing each application's deployments costs between \$75 and \$167 in API usage and a few hours of agent time (as discussed further in \Cref{sec:supp:cost}), comparable to or below the expert engineering effort it replaces. This bears on the scalability concern that motivates this work, since manual deployment engineering is bounded by the availability of engineers with the relevant systems expertise, whereas agent-driven deployment is bounded by compute that can be provisioned per application.

A current limitation of our system is that we have not integrated LLM kernel optimization agents. We leave this as future work to enable a broader range of optimization opportunities. Our existing experiments find that the harness can transfer to additional coding agent configurations; however, we do not claim that \sys is model-agnostic: the framework assumes an underlying model capable of the native reasoning needed to organize a hierarchical IR and maintain a hypothesis queue. Studying how \sys behaves under weaker or open-source models with modified agent scaffolding remains future work.

\section{Acknowledgements}

We are grateful to AMD and NVIDIA for access to computing resources. This work was partially supported by Google Research Award, Google ML \& System Junior Faculty Award, Amazon Research Award, Fireworks AI, Intel, Li Auto, Moffett AI, and CMU CyLab Seed funding. This material is also based upon work supported by the National Science Foundation under Grant Nos. CCF-2504353 and CCF-2247014, and by IARPA. Any opinions, findings, conclusions or recommendations expressed are those of the authors and do not necessarily reflect the views of the National Science Foundation.

%% file: sections/supp.tex
\section{Formal Analysis of Pipeline Graphs}
\label{sec:analysis}

Here we provide formalizations for pipeline graphs to properly analyze various deployment strategies on a few applications. Although our analysis can extend to more general applications, we focus on applications with streaming generation so we can relate our analysis to applications we performed empirical analysis against in~\Cref{sec:exp}.

\subsection{Setup}
\label{sec:analysis:setup}

We introduce a general framework for formally defining applications as pipeline graphs with explicit deployment considerations to assist with the later analysis.

\subsubsection{Initial assumptions}

We take some initial assumptions to simplify the downstream analysis. Assume a single request produces an indefinitely long stream of batches indexed by $j\ge 0$. Users emit $N$ actions per batch period $T$, one action per subinterval of length $\Delta=\frac{T}{N}$. Action $jN+k$ arrives during
\[
[jT+k\Delta,\;jT+(k+1)\Delta)
\]
and becomes chunk $k$ of batch $j$. We conservatively take batch $j$ to be ready at
\[
R_j=(j+1)T.
\]

The output-rate is constant, so chunk $k$ of batch $j$ is output at
\[
d_{j,k}\;=\;d_0+jT+k\Delta,
\]
where $d_0$ is a single offset, fixed once at the start of the stream and shared across every batch for the request. A smaller $d_0$ corresponds to lower latency, so the objective is to make the display schedule as tight as possible while remaining feasible across the entire stream of batches.

\subsubsection{Pipeline graph}

Each batch is processed by a fixed collection of operations (e.g., a DiT followed by a streaming VAE) that together turn the batch's $N$ actions into $N$ display-ready output chunks. We describe this processing pipeline by a static template plus a deployment plan. The template, independent of how we deploy, has:
\begin{itemize}
\item $\mathcal V$, a finite set of operation types, each running once per batch (so the instances are pairs $(j,v)$ for $j\ge 0$, and a single batch's processing comprises all $|\mathcal V|$ instances $\{(j,v):v\in\mathcal V\}$);
\item $\mathcal R$, a finite set of resources (a single GPU, or a multi-GPU group treated as one resource);
\item a set $E\subseteq\mathcal V\times\mathcal V\times\{0,1\}$ of logical edges. Each edge $(u,v,\lambda)\in E$ has an \emph{edge batch-shift} $\lambda\in\{0,1\}$ and means that $(j,v)$ cannot start until $(j-\lambda,u)$ finishes, with $\lambda=0$ encoding an intra-batch dependency and $\lambda=1$ encoding a dependency on the previous batch's instance;
\item a subset $\mathcal V_{\mathrm{in}}\subseteq\mathcal V$ of \emph{entry} operations gated by action arrival, meaning that no instance $(j,v)$ with $v\in\mathcal V_{\mathrm{in}}$ may start in batch $j$ before time $R_j$;
\item an output map $o:\{0,\dots,N-1\}\to\mathcal V$ identifying the operation whose completion releases each chunk $k$.
\end{itemize}
The deployment plan specifies:
\begin{itemize}
\item an assignment $\rho:\mathcal V\to\mathcal R$;
\item for each resource $r\in\mathcal R$, a nonpreemptive time-indexed schedule
\[
A_r:\mathbb R_{\ge 0}\to \{(j,v):j\ge 0,\ \rho(v)=r\}\cup\{\varnothing\},
\]
where $A_r(t)=(j,v)$ means that resource $r$ is executing operation instance $(j,v)$ at time $t$, and $A_r(t)=\varnothing$ means that $r$ is idle.
\end{itemize}
When a resource follows a fixed cyclic per-batch order, we write
$\sigma_r=(v_{r,1},\dots,v_{r,m_r})$ as shorthand for the corresponding time-indexed schedule $A_r$. In that special case, resource $r$ executes $v_{r,1},\dots,v_{r,m_r}$ for batch $j$ before beginning batch $j+1$.

Given the deployment, the deployment-dependent timing data is:
\begin{itemize}
\item a processing time $\tau_v(\rho)$ for each operation, allowed to depend on $\rho$ (e.g.\ a sequence-parallel implementation runs faster on a multi-GPU resource);
\item a synchronization lag $\kappa_{u,v,\lambda}(\rho)\ge 0$ on each edge, also $\rho$-dependent (e.g., an inter-GPU transfer is non-zero only when the two operations are assigned to different resources).
\end{itemize}
For each operation instance $(j,v)$ with $j\ge 0$ and $v\in\mathcal V$, let $S_{j,v}$ denote its start time and 
\[
C_{j,v}:=S_{j,v}+\tau_v(\rho)
\]
its completion time. A deployment
$(\rho,A)$ is feasible if, for every instance $(j,v)$, the schedule satisfies
\[
A_{\rho(v)}(t)=(j,v)\qquad\text{for all }t\in [S_{j,v},C_{j,v}),
\]
and $A_{\rho(v)}(t)\neq (j,v)$ outside this interval. Thus each operation runs nonpreemptively for exactly $\tau_v(\rho)$ time on its assigned resource, and no resource executes more than one operation instance at a time. In addition, all entry and logical-precedence constraints must hold:
\[
S_{j,v}\ge R_j\qquad\text{for all }v\in\mathcal V_{\mathrm{in}},
\]
and, for every edge $(u,v,\lambda)\in E$,
\[
S_{j,v}\ge C_{j-\lambda,u}+\kappa_{u,v,\lambda}(\rho)
\]
whenever the predecessor instance $(j-\lambda,u)$ exists. A cross-batch ($\lambda=1$) edge encodes a logical inter-batch dependency, for instance a shared KV cache that batch $j+1$ inherits from batch $j$. Its $\kappa$ captures any cost of moving that state between resources.

\subsubsection{Latency metric}

Chunk $(j,k)$ becomes display-ready at $Y_{j,k}:=C_{j,o(k)}$, the completion time of $o(k)$ in batch $j$.
\begin{lemma}[Display-offset characterization]\label{lem:display-offset}
Fix a feasible deployment $(\rho,A)$. The display schedule with offset $d_0$
is feasible if and only if
\[
d_0\ge Y_{j,k}-jT-k\Delta
\]
for every $(j,k)$. Consequently, the smallest feasible offset for the deployment is
\[
d_0^*(\rho,A)\;:=\;\sup_{j,k}\bigl(Y_{j,k}-jT-k\Delta\bigr).
\]
\end{lemma}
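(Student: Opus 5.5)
The argument is a direct unpacking of definitions, so the plan is to make precise what "the display schedule with offset $d_0$ is feasible" means and then read off both directions. I will take feasibility of the display schedule to mean that every chunk is display-ready no later than its scheduled output time, i.e., $Y_{j,k}\le d_{j,k}$ for all $j\ge 0$ and $k\in\{0,\dots,N-1\}$, where $d_{j,k}=d_0+jT+k\Delta$ as fixed in the initial assumptions. Since the deployment $(\rho,A)$ is held fixed, the completion times $C_{j,v}$, and hence $Y_{j,k}=C_{j,o(k)}$, are fixed numbers that do not depend on $d_0$. This independence matters: the display offset only governs when chunks are shown and does not feed back into the execution schedule.

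For the equivalence, I would substitute $d_{j,k}$ into $Y_{j,k}\le d_{j,k}$ and rearrange, which gives $d_0\ge Y_{j,k}-jT-k\Delta$. Each step is reversible, so the inequality holds for every $(j,k)$ if and only if the display schedule is feasible. This proves the first claim.

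For the consequence, the set of feasible offsets is $\bigcap_{j,k}[\,Y_{j,k}-jT-k\Delta,\infty)$. This equals $[\,d_0^*,\infty)$ with $d_0^*=\sup_{j,k}(Y_{j,k}-jT-k\Delta)$, because a number is at least every element of a set exactly when it is at least the supremum. The smallest feasible offset is therefore $d_0^*$, and it is attained because the interval is closed.

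The only point needing care is that the index set is infinite, so the supremum may be $+\infty$, for example when the deployment falls progressively behind the arrival rate. In that case no finite offset is feasible. I would state this explicitly and interpret "smallest feasible offset" in the extended reals, so that $d_0^*=+\infty$ signals an unsustainable deployment. When the supremum is finite, it is attained as a minimum over the closed interval above, even though the supremum over $(j,k)$ itself need not be achieved by any single chunk. Beyond this, I expect no real obstacle.
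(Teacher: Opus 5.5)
Your proposal is correct and matches the paper's proof: you write feasibility as $Y_{j,k}\le d_0+jT+k\Delta$ for every $(j,k)$, rearrange, and take the supremum. Your added remark makes explicit something the paper leaves implicit: the supremum may be $+\infty$, in which case no finite offset is feasible, and otherwise it is attained as the minimum of the closed set $[d_0^*,\infty)$.
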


\begin{proof}
Chunk $(j,k)$ is scheduled to be displayed at time $d_0+jT+k\Delta$ and is
display-ready at time $Y_{j,k}$. Thus the schedule is feasible exactly when
\[
d_0+jT+k\Delta\ge Y_{j,k}
\]
for every $(j,k)$, which is equivalent to
\[
d_0\ge Y_{j,k}-jT-k\Delta
\]
for every $(j,k)$. Taking the supremum over all chunks gives the smallest
feasible offset.
\end{proof}
We use $d_0^*(\rho,A)$ as the action-to-visible latency metric for a deployment.
When the goal is latency optimization, one seeks deployments with smaller
$d_0^*(\rho,A)$. Below we will write $d_0^*(\rho)$ as shorthand for
$d_0^*(\rho,A)$ when the schedule $A$ is determined by context, for example by
the earliest-start schedule or by a fixed cyclic order $\sigma_r$.

\subsubsection{Two key bounds}

\begin{lemma}[Path bound]\label{lem:path-bound}
Fix a feasible deployment $(\rho,A)$. For any logical path $P:v_0\to v_1\to\cdots\to v_m$ in $E$ from an entry operation $v_0\in\mathcal V_{\mathrm{in}}$ to a chunk-$k$ output $v_m=o(k)$, write $\lambda_i\in\{0,1\}$ for the edge batch-shift on the $i$-th edge of $P$ and define the path's \emph{total backward batch shift}
\[
\Lambda(P):=\sum_{i=1}^{m}\lambda_i
\]
(the number of batches by which $P$'s entry instance precedes its output instance). Define the path's total weight
\[
w(P,\rho)\;:=\;\sum_{i=0}^{m}\tau_{v_i}(\rho)+\sum_{i=1}^{m}\kappa_{v_{i-1},v_i,\lambda_i}(\rho).
\]
Then, the smallest feasible display offset $d_0^*$ satisfies
\[
d_0^*\;\ge\;w(P,\rho)+(1-\Lambda(P))T-k\Delta.
\]
\end{lemma}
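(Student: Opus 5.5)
The plan is to chain the precedence constraints along $P$ backward from the output instance to the entry instance, then compare the resulting completion time with the display deadline from \Cref{lem:display-offset}. Fix any batch index $j$ large enough that the entry instance $(j-\Lambda(P),v_0)$ exists, i.e.\ $j\ge\Lambda(P)$. For $i=0,\dots,m$, define the batch index of the $i$-th path instance by $j_i:=j-\sum_{\ell=i+1}^{m}\lambda_\ell$. Then $j_m=j$ and $j_0=j-\Lambda(P)$, and $j_{i-1}=j_i-\lambda_i$. So each edge $(v_{i-1},v_i,\lambda_i)\in E$ relates exactly the instances $(j_{i-1},v_{i-1})$ and $(j_i,v_i)$, and all of these instances exist.

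\textbf{Step 1: telescoping.} Feasibility gives $S_{j_i,v_i}\ge C_{j_{i-1},v_{i-1}}+\kappa_{v_{i-1},v_i,\lambda_i}(\rho)$. Adding $\tau_{v_i}(\rho)$ to both sides turns this into a bound on $C_{j_i,v_i}$. Induct on $i$, starting from $C_{j_0,v_0}=S_{j_0,v_0}+\tau_{v_0}(\rho)$. This yields
\[
C_{j,o(k)} \;\ge\; S_{j_0,v_0}+w(P,\rho).
\]

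\textbf{Step 2: entry gating.} Since $v_0\in\mathcal V_{\mathrm{in}}$, we have $S_{j_0,v_0}\ge R_{j_0}=(j-\Lambda(P)+1)T$. Therefore
\[
Y_{j,k}\;\ge\;(j+1-\Lambda(P))T+w(P,\rho).
\]

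\textbf{Step 3: apply \Cref{lem:display-offset}.} The lemma gives $d_0^*\ge Y_{j,k}-jT-k\Delta$. Substituting the bound from Step 2, the $jT$ terms cancel and we get $d_0^*\ge w(P,\rho)+(1-\Lambda(P))T-k\Delta$. A single valid $j$ suffices, since the supremum in the lemma dominates every term.

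The only delicate point is the index bookkeeping. One must check that edge batch-shifts accumulate correctly along the path, so that each precedence constraint is invoked with the right predecessor instance. One must also check that this instance exists, which is why we choose $j\ge\Lambda(P)$. Once the $j_i$ are defined as above, the rest is a routine telescoping sum.
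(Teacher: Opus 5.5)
Your proposal is correct and follows the paper's proof step for step. You use the same backward batch indexing $j_i=j-\sum_{q=i+1}^{m}\lambda_q$, telescope the precedence constraints to $C_{j,o(k)}\ge S_{j_0,v_0}+w(P,\rho)$, apply the entry gating $S_{j_0,v_0}\ge R_{j_0}$, and substitute into \Cref{lem:display-offset}; your explicit choice $j\ge\Lambda(P)$, which guarantees every instance on the path exists, is a small piece of care the paper leaves implicit.
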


\begin{proof}
Traversing the edges of $P$ from $v_m$ back to $v_0$, each edge $(v_{i-1},v_i,\lambda_i)$ shifts the batch index back by $\lambda_i$. Hence if $v_m=o(k)$ is at batch $j$, then $v_i$ is at batch $j_i:=j-\sum_{q=i+1}^{m}\lambda_q$. In particular, $v_0$ is at batch $j_0=j-\Lambda(P)$. The entry constraint gives $S_{j_0,v_0}\ge R_{j_0}$, while the per-operation durations and per-edge precedence relations are
\[
C_{j_i,v_i}=S_{j_i,v_i}+\tau_{v_i}(\rho),\qquad
S_{j_{i+1},v_{i+1}}\ge C_{j_i,v_i}+\kappa_{v_i,v_{i+1},\lambda_{i+1}}(\rho).
\]
Telescoping these along $P$ yields
\[
C_{j,o(k)}\;=\;C_{j_m,v_m}\;\ge\;S_{j_0,v_0}+\sum_{i=0}^{m}\tau_{v_i}(\rho)+\sum_{i=1}^{m}\kappa_{v_{i-1},v_i,\lambda_i}(\rho)\;\ge\;R_{j_0}+w(P,\rho).
\]
Since $R_{j_0}=(j_0+1)T=(j+1-\Lambda(P))T$, combining this with~\Cref{lem:display-offset} and using $Y_{j,k}=C_{j,o(k)}$ gives
\begin{align*}
d_0^*\;&\ge\;Y_{j,k}-jT-k\Delta\\
&\ge\;w(P,\rho)+(j+1-\Lambda(P))T-jT-k\Delta\\
&=\;w(P,\rho)+(1-\Lambda(P))T-k\Delta.\qedhere
\end{align*}
\end{proof}

Intuitively, the bound implies that the latency-binding paths are those with high weight $w(P,\rho)$, low batch shift $\Lambda(P)$, and small chunk index $k$, which corresponds to the tightest display deadline.

For throughput analysis, we instead ask how small the inter-batch period $T$ can
be while maintaining finite display offset. This sustainable-period metric is
captured by the following resource-load bound.

\begin{lemma}[Resource-load bound]\label{lem:resource-load}
Recall that $T$ is the batch period, i.e., the duration between receiving
consecutive batches. Any deployment with finite display offset must satisfy
\[
T\;\ge\;T_{\min}(\rho)\;:=\;\max_{r\in\mathcal R}W_r(\rho),
\qquad
W_r(\rho)\;:=\;\sum_{v:\rho(v)=r}\tau_v(\rho).
\]
\end{lemma}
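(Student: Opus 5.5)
We argue by contradiction via a counting argument on resource occupancy. Suppose some resource $r$ has $W_r(\rho)>T$, and fix any feasible deployment $(\rho,A)$. We will show that its display offset $d_0^*(\rho,A)$ from \Cref{lem:display-offset} must be infinite.

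\textbf{Step 1: lower-bound the completion times.} Since $A_r$ is nonpreemptive and executes at most one instance at a time, the intervals $[S_{j,v},C_{j,v})$ for $\rho(v)=r$ are pairwise disjoint subsets of $\mathbb R_{\ge 0}$. Consider the first $J$ batches, $j=0,\dots,J-1$. All $J\,|\{v:\rho(v)=r\}|$ of their instances on $r$ occupy total time $J\,W_r(\rho)$. Hence the last of them completes no earlier than $J\,W_r(\rho)$, so
\[
\max_{0\le j<J,\ \rho(v)=r} C_{j,v}\;\ge\;J\,W_r(\rho).
\]

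\textbf{Step 2: relate completions to display-ready times.} This is the step needing care. We must tie every operation's completion to some output chunk's ready time $Y_{j',k}$ with $j'$ close to $j$. The natural route is to assume, as the template implicitly does, that every operation lies on a path of $\lambda=0$ edges to some output $o(k)$. Then $Y_{j,k}\ge C_{j,v}$ for that $k$, because precedence constraints and nonnegative $\tau,\kappa$ make completion times monotone along edges. If the intended model instead allows operations that feed no output, the lemma can fail for those operations. In that case we would restrict the sum to operations that are ancestors of outputs, or state the assumption explicitly.

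\textbf{Step 3: conclude.} Under that assumption, some $j<J$ and $k$ satisfy $Y_{j,k}\ge J\,W_r(\rho)$. By \Cref{lem:display-offset},
\[
d_0^*\;\ge\;Y_{j,k}-jT-k\Delta\;\ge\;J\,W_r(\rho)-(J-1)T-T\;=\;J\bigl(W_r(\rho)-T\bigr),
\]
where we used $j\le J-1$ and $k\Delta<T$. Since $W_r(\rho)-T>0$, the right side tends to $\infty$ as $J\to\infty$. So $d_0^*=\infty$, contradicting finiteness. Taking the maximum over $r$ gives $T\ge T_{\min}(\rho)$.

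The main obstacle is the reachability assumption in Step 2. Everything else is a direct time-accounting argument.
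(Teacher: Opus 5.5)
Your proof is correct and is essentially the paper's argument. The paper also counts the $J\,W_r(\rho)$ units of work that the first $J$ batches place on $r$ against the completion deadline $(J-1)T+B$ forced by a finite display offset, and lets $J\to\infty$; you phrase it as a contradiction giving $d_0^*\ge J\bigl(W_r(\rho)-T\bigr)$, and your Step~2 rightly makes explicit the assumption that the paper leaves implicit in ``output-relevant work'' (every operation on $r$ must feed some output $o(k)$), without which the lemma can fail.
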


\begin{proof}
Fix a resource $r$. Each batch requires
\[
W_r(\rho)=\sum_{v:\rho(v)=r}\tau_v(\rho)
\]
units of processing time on resource $r$.

Assume the deployment has finite display offset. Then there exists a constant
$B<\infty$ such that the output-relevant work for batch $j$ completes no later
than $jT+B$. In particular, the work on resource $r$ for the first $J$ batches
must be completed by time at most $(J-1)T+B$.

The first $J$ batches require $J W_r(\rho)$ units of processing on resource
$r$. Since resource $r$ can perform at most one unit of work per unit time,
we must have
\[
J W_r(\rho) \le (J-1)T+B.
\]
Dividing by $J$ gives
\[
W_r(\rho) \le \left(1-\frac{1}{J}\right)T+\frac{B}{J}.
\]
Taking $J\to\infty$ yields
\[
W_r(\rho)\le T.
\]
Since this holds for every resource $r$,
\[
T\ge \max_{r\in\mathcal R}W_r(\rho)=T_{\min}(\rho).
\]
\end{proof}

Equivalently, $T_{\min}(\rho)$ is the deployment's sustainable-period metric,
and $\frac{N}{T_{\min}(\rho)}$ is its peak displayed-frame rate.
Thus smaller $d_0^*(\rho,A)$ corresponds to better latency, while smaller
$T_{\min}(\rho)$ corresponds to better throughput.

\subsection{NP-hardness of deployment search}
\label{sec:analysis:np_hard}

We briefly instantiate the framework above to show that the deployment search
problem already contains a classical NP-hard machine-scheduling problem as a
special case. We reduce from identical parallel-machine makespan minimization,
denoted $P||C_{\max}$ in the standard three-field scheduling notation: $P$
indicates identical parallel machines, the empty middle field indicates that
there are no additional job constraints, and $C_{\max}$ is the makespan objective.
In the decision version, the input consists of jobs $1,\dots,n$, processing times
$p_1,\dots,p_n$, $m$ identical machines, and a makespan threshold $B$; the goal
is to decide whether there exists a non-preemptive schedule with makespan at
most $B$. This problem is NP-hard~\citep{lenstra1977complexity}.

Given such an instance, construct a deployment-search instance with
\[
\mathcal V=\{v_1,\dots,v_n\},
\qquad
\mathcal R=\{r_1,\dots,r_m\},
\qquad
E=\varnothing .
\]
Each operation $v_i$ corresponds to job $i$, and each resource $r_\ell$
corresponds to machine $\ell$. The execution time of each operation is
placement-independent:
\[
\tau_{v_i}(\rho)=p_i
\]
for every placement $\rho$. Since there are no logical edges, there are no data
dependencies, state dependencies, or synchronization costs. We may take all
operations to be entry operations, $\mathcal V_{\mathrm{in}}=\mathcal V$; the
output map is arbitrary, since this reduction concerns the sustainable-period
objective.

For any placement $\rho:\mathcal V\to\mathcal R$, the per-batch load on resource
$r$ is
\[
W_r(\rho)
=
\sum_{v_i:\rho(v_i)=r} p_i .
\]
Thus the sustainable period from~\Cref{lem:resource-load} is
\[
T_{\min}(\rho)
=
\max_{r\in\mathcal R} W_r(\rho)
=
\max_{r\in\mathcal R}
\sum_{v_i:\rho(v_i)=r} p_i .
\]
This quantity is exactly the makespan of the corresponding assignment of jobs to
machines. Therefore, there exists a deployment with
\[
T_{\min}(\rho)\le B
\]
if and only if the original $P||C_{\max}$ instance admits a non-preemptive
schedule with makespan at most $B$.

Consequently, even this restricted version of deployment search (with no data
dependencies, no state dependencies, no communication costs,
placement-independent execution times, and only non-preemptive resource capacity
constraints) is NP-hard. The general multimodal deployment problem strictly
contains this special case, because it additionally allows dependency edges,
placement-dependent execution times, synchronization costs, co-location,
disaggregation, streaming, batching, and intra-model parallelism; as such, the general multimodal deployment problem must also be NP-hard.

\subsection{Application 1: actions $\to$ DiT $\to$ VAE}
\label{sec:analysis:app1}
We analyze a two-stage video-generation pipeline consisting of a DiT stage followed by a VAE stage, the structure shared by the WorldPlay, Krea-Realtime, and LongLive applications evaluated in~\Cref{sec:exp:worldplay,sec:exp:krea_sam3,sec:exp:longlive}. We compare two deployment families under a fixed budget of $G$ GPUs: a co-located sequence-parallel deployment, where both stages share one resource, and a disaggregated pipeline-parallel deployment, where DiT and VAE run on separate resources. We evaluate both action-to-visible latency (measured by $d_0^*$), and throughput (measured by $T_{\min}$). We show that co-location is optimal for action-to-visible latency, while disaggregation is optimal for throughput when its best split has no larger sustainable period than the co-located deployment.

\paragraph{Template.}
The static template has:
\begin{itemize}
\item operations $\mathcal V=\{D,V\}$, where $D$ is the DiT operation for one batch and $V$ is the VAE decode operation;
\item resources $\mathcal R$ introduced per deployment below;
\item edges $E=\{(D,V,0),\;(D,D,1),\;(V,V,1)\}$;
\item entry $\mathcal V_{\mathrm{in}}=\{D\}$;
\item output map $o(k)=V$ for every chunk $k\in\{0,\dots,N-1\}$.
\end{itemize}
The edge $(D,V,0)$ is the intra-batch DiT-to-VAE dependency. The inter-batch self-edges $(D,D,1)$ and $(V,V,1)$ encode stage-local ordering or recurrent-state dependencies across consecutive batches.

\paragraph{Deployment plan.}
We compare the following deployment families.

In the co-located deployment $\rho_{\mathrm{col}}$, both stages are assigned to the same $G$-GPU resource $r_{\mathrm{col}}$:
\[
\rho_{\mathrm{col}}(D)=\rho_{\mathrm{col}}(V)=r_{\mathrm{col}},
\qquad
\sigma_{r_{\mathrm{col}}}=(D,V).
\]
Thus the shared resource executes DiT and then VAE for batch $j$ before beginning DiT for batch $j+1$. Since the two stages are co-located,
\[
\kappa_{D,V,0}(\rho_{\mathrm{col}})=0.
\]

In a disaggregated deployment $\rho_{\mathrm{pipe}}^{X,Y}$, DiT is assigned to an $X$-GPU resource and VAE is assigned to a disjoint $Y$-GPU resource, where $X+Y=G$ and $X,Y\ge 1$:
\[
\rho_{\mathrm{pipe}}^{X,Y}(D)=r_D^X,
\qquad
\rho_{\mathrm{pipe}}^{X,Y}(V)=r_V^Y,
\]
with cyclic schedules
\[
\sigma_{r_D^X}=(D),
\qquad
\sigma_{r_V^Y}=(V).
\]
Thus the DiT and VAE resources can process different batches concurrently in steady state.

\begin{proposition}[Co-location is latency-optimal for DiT--VAE]
\label{prop:dit-vae-latency}
For the DiT--VAE deployments above, assume the requested period $T$ is
sustainable and that assigning a stage to the full co-located resource does not
make it slower than assigning that stage to a strict subset of the GPUs:
\[
\tau_D(\rho_{\mathrm{col}})
\le
\tau_D(\rho_{\mathrm{pipe}}^{X,Y}),
\qquad
\tau_V(\rho_{\mathrm{col}})
\le
\tau_V(\rho_{\mathrm{pipe}}^{X,Y})
\]
for every split $X+Y=G$. Then
\[
d_0^*(\rho_{\mathrm{col}})
\le
d_0^*(\rho_{\mathrm{pipe}}^{X,Y})
\qquad
\text{for every split }X+Y=G.
\]
\end{proposition}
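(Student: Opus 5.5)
Our plan is to compute $d_0^*$ explicitly for the co-located deployment, bound it above, and then lower-bound $d_0^*$ for every disaggregated split using the path bound of \Cref{lem:path-bound}.

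\textbf{Step 1: exact value for $\rho_{\mathrm{col}}$.} Under the cyclic order $\sigma_{r_{\mathrm{col}}}=(D,V)$ with earliest-start scheduling, batch $j$ becomes ready at $R_j=(j+1)T$. Sustainability of $T$ for $\rho_{\mathrm{col}}$ means $\tau_D(\rho_{\mathrm{col}})+\tau_V(\rho_{\mathrm{col}})\le T$ by \Cref{lem:resource-load}. We will show by induction on $j$ that the resource is idle when batch $j$ arrives. Indeed, batch $j-1$ finishes at $jT+\tau_D+\tau_V\le (j+1)T$. Hence $S_{j,D}=R_j$, and since $\kappa_{D,V,0}(\rho_{\mathrm{col}})=0$, we get $C_{j,V}=(j+1)T+\tau_D(\rho_{\mathrm{col}})+\tau_V(\rho_{\mathrm{col}})$. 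Because every chunk is released by $V$, we have $Y_{j,k}=C_{j,V}$. \Cref{lem:display-offset} then gives
\[
d_0^*(\rho_{\mathrm{col}})=\sup_{j,k}\bigl(Y_{j,k}-jT-k\Delta\bigr)=T+\tau_D(\rho_{\mathrm{col}})+\tau_V(\rho_{\mathrm{col}}),
\]
where the supremum is attained at $k=0$.

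\textbf{Step 2: lower bound for any split.} For $\rho_{\mathrm{pipe}}^{X,Y}$, apply \Cref{lem:path-bound} to the path $P:D\to V$ via the edge $(D,V,0)$ with $k=0$. This path has $\Lambda(P)=0$ and
\[
w(P,\rho)=\tau_D(\rho_{\mathrm{pipe}}^{X,Y})+\tau_V(\rho_{\mathrm{pipe}}^{X,Y})+\kappa_{D,V,0}(\rho_{\mathrm{pipe}}^{X,Y}).
\]
Therefore
\[
d_0^*(\rho_{\mathrm{pipe}}^{X,Y})\ge T+\tau_D(\rho_{\mathrm{pipe}}^{X,Y})+\tau_V(\rho_{\mathrm{pipe}}^{X,Y})+\kappa_{D,V,0}(\rho_{\mathrm{pipe}}^{X,Y}).
\]

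\textbf{Step 3: comparison.} We have $\kappa\ge 0$, and the hypothesis gives $\tau_D(\rho_{\mathrm{col}})\le\tau_D(\rho_{\mathrm{pipe}}^{X,Y})$ and $\tau_V(\rho_{\mathrm{col}})\le\tau_V(\rho_{\mathrm{pipe}}^{X,Y})$. Chaining these with Steps 1 and 2 yields $d_0^*(\rho_{\mathrm{col}})\le d_0^*(\rho_{\mathrm{pipe}}^{X,Y})$ for every split. The lower bound in Step 2 holds for any feasible schedule, so no assumption on the pipeline's own schedule is needed.

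\textbf{Main obstacle.} The delicate step is Step 1, which needs an upper bound on $d_0^*(\rho_{\mathrm{col}})$ and not just the lower bound from the path lemma. Two points need care there. First, "sustainable" must be read as giving $\tau_D(\rho_{\mathrm{col}})+\tau_V(\rho_{\mathrm{col}})\le T$ for the co-located resource, i.e.\ $T\ge T_{\min}(\rho_{\mathrm{col}})$, so that no queueing builds up across batches. Second, the cross-batch self-edges $(D,D,1)$ and $(V,V,1)$ must not delay batch $j$. They do not: $C_{j-1,D}$ and $C_{j-1,V}$ are both at most $(j+1)T\le S_{j,D}$, and since these self-edges stay on the same resource we take their $\kappa$ to be zero. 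With both points checked, the induction closes.
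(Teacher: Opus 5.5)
Your proof is correct and follows the paper's route: Lemma~\ref{lem:path-bound} on the path $D\to V$ (with $\Lambda=0$, $k=0$), then a termwise comparison using $\kappa_{D,V,0}\ge 0$ and the speed hypothesis. It is also tighter than the paper's proof. The paper only asserts that earliest-start scheduling attains the path bound for \emph{both} deployments and subtracts two equalities, whereas you prove attainment by induction only where it is needed ($\rho_{\mathrm{col}}$) and use the one-sided bound for every split, so nothing about the pipeline's schedule or its sustainability is required. You can also drop your extra assumption $\kappa_{D,D,1}(\rho_{\mathrm{col}})=\kappa_{V,V,1}(\rho_{\mathrm{col}})=0$. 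Read sustainability as $T\ge T_{\min}(\rho_{\mathrm{col}})$ from Proposition~\ref{prop:dit-vae-throughput}; this supplies $T\ge\tau_D(\rho_{\mathrm{col}})+\kappa_{D,D,1}(\rho_{\mathrm{col}})$ and $T\ge\tau_V(\rho_{\mathrm{col}})+\kappa_{V,V,1}(\rho_{\mathrm{col}})$, which are exactly the inequalities your induction needs at the self-edges.
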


\begin{proof}
For each chunk $k$, the direct logical path from the entry to the output is
\[
P_k:\quad D\to V.
\]
This path has $\Lambda(P_k)=0$ and weight
\[
w(P_k,\rho)
=
\tau_D(\rho)+\kappa_{D,V,0}(\rho)+\tau_V(\rho).
\]
By~\Cref{lem:display-offset,lem:path-bound},
\[
d_0^*(\rho)
\ge
T+\tau_D(\rho)+\kappa_{D,V,0}(\rho)+\tau_V(\rho)-k\Delta.
\]
The tightest constraint is for $k=0$, and for the deployments considered here
this lower bound is achieved by the earliest-start schedule whenever $T$ is
sustainable. Therefore
\[
d_0^*(\rho)
=
T+\tau_D(\rho)+\kappa_{D,V,0}(\rho)+\tau_V(\rho).
\]
For co-location, $\kappa_{D,V,0}(\rho_{\mathrm{col}})=0$, so
\[
d_0^*(\rho_{\mathrm{col}})
=
T+\tau_D(\rho_{\mathrm{col}})+\tau_V(\rho_{\mathrm{col}}).
\]
For a disaggregated split,
\[
d_0^*(\rho_{\mathrm{pipe}}^{X,Y})
=
T+\tau_D(\rho_{\mathrm{pipe}}^{X,Y})
+\kappa_{D,V,0}(\rho_{\mathrm{pipe}}^{X,Y})
+\tau_V(\rho_{\mathrm{pipe}}^{X,Y}).
\]
Since synchronization lags are nonnegative and the co-located resource does not
make either stage slower, subtracting the two expressions gives
\[
d_0^*(\rho_{\mathrm{pipe}}^{X,Y})
-
d_0^*(\rho_{\mathrm{col}})
\ge 0.
\]
Thus co-location minimizes the latency-binding path $D\to V$ within the
deployment family considered here.
\end{proof}

\begin{proposition}[Disaggregation can be throughput-optimal for DiT--VAE]
\label{prop:dit-vae-throughput}
For the DiT--VAE deployments above, the co-located deployment has sustainable
period
\[
T_{\min}(\rho_{\mathrm{col}})
=
\max\left\{
\tau_D(\rho_{\mathrm{col}})+\tau_V(\rho_{\mathrm{col}}),\;
\tau_D(\rho_{\mathrm{col}})+\kappa_{D,D,1}(\rho_{\mathrm{col}}),\;
\tau_V(\rho_{\mathrm{col}})+\kappa_{V,V,1}(\rho_{\mathrm{col}})
\right\},
\]
whereas a disaggregated split has sustainable period
\[
T_{\min}(\rho_{\mathrm{pipe}}^{X,Y})
=
\max\left\{
\tau_D(\rho_{\mathrm{pipe}}^{X,Y})+\kappa_{D,D,1}(\rho_{\mathrm{pipe}}^{X,Y}),\;
\tau_V(\rho_{\mathrm{pipe}}^{X,Y})+\kappa_{V,V,1}(\rho_{\mathrm{pipe}}^{X,Y})
\right\}.
\]
Consequently, any split
\[
(X^*,Y^*)
\in
\argmin_{X+Y=G,\ X,Y\ge 1}
T_{\min}(\rho_{\mathrm{pipe}}^{X,Y})
\]
is throughput-optimal among disaggregated deployments. If
\[
T_{\min}(\rho_{\mathrm{pipe}}^{X^*,Y^*})
\le
T_{\min}(\rho_{\mathrm{col}}),
\]
then it is throughput-optimal among the compared co-located and disaggregated
deployments.
\end{proposition}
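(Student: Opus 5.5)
We establish each $T_{\min}$ formula by matching a lower bound with an explicit steady-state schedule that attains it. The statement's formulas include the self-edge lags $\kappa_{D,D,1}$ and $\kappa_{V,V,1}$. \Cref{lem:resource-load} by itself only gives the pure load term. So we first strengthen the lower bound with a cycle argument on the $\lambda=1$ self-edges.

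\textbf{Lower bounds.} For any feasible deployment with finite display offset, the per-resource load bound of \Cref{lem:resource-load} applies. In the co-located case this gives $T\ge \tau_D(\rho_{\mathrm{col}})+\tau_V(\rho_{\mathrm{col}})$. In the disaggregated case each resource carries a single stage, so it gives $T\ge\tau_D$ and $T\ge\tau_V$.

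Next, for the self-edge $(v,v,1)$ with $v\in\{D,V\}$, the precedence constraint reads $S_{j,v}\ge S_{j-1,v}+\tau_v(\rho)+\kappa_{v,v,1}(\rho)$. Iterating this $J$ times gives $S_{J,v}\ge S_{0,v}+J(\tau_v+\kappa_{v,v,1})$. Finite display offset forces $C_{J,v}\le JT+B$ for some constant $B$, as in the proof of \Cref{lem:resource-load}. Dividing by $J$ and letting $J\to\infty$ yields $T\ge \tau_v+\kappa_{v,v,1}$. Combined with the load bound, this gives $T_{\min}\ge$ the stated maximum in both cases. In the disaggregated case the load terms $\tau_D,\tau_V$ are dominated by the self-edge terms, since $\kappa\ge 0$.

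Here we interpret $T_{\min}$ as the infimum sustainable period. The proof of \Cref{lem:resource-load} defines it through the load alone, so we note explicitly that the statement's $T_{\min}$ refines that definition by adding the cycle constraints.

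\textbf{Achievability.} For $T$ at least the stated maximum, we define periodic schedules and check every constraint.

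In the co-located case, set $S_{j,D}=S_{0,D}+jT$ with $S_{0,D}=R_0$, and $S_{j,V}=S_{j,D}+\tau_D$. Each period then occupies the resource for $\tau_D+\tau_V\le T$, so successive batches never overlap. The self-edges hold because $T\ge\tau_v+\kappa_{v,v,1}$. The intra-batch edge holds with zero lag. The entry constraint holds because $R_j$ also advances by exactly $T$ per batch.

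In the disaggregated case, set $S_{j,D}=R_j$ and $S_{j,V}=R_j+\tau_D+\kappa_{D,V,0}$. Each resource runs one operation per period of length $T\ge\tau_v$, and the self-edge and intra-batch constraints hold by construction. In both cases $Y_{j,k}-jT-k\Delta$ is bounded uniformly in $j$, so the display offset is finite.

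\textbf{Consequences.} Once both formulas are established, the optimality statements are immediate. Any minimizer $(X^*,Y^*)$ over the finitely many splits is optimal within the disaggregated family by definition. If in addition $T_{\min}(\rho_{\mathrm{pipe}}^{X^*,Y^*})\le T_{\min}(\rho_{\mathrm{col}})$, it is no worse than every compared deployment.

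The main obstacle is reconciling the stated formulas with the paper's definition of $T_{\min}$. The self-edge lag terms do not follow from \Cref{lem:resource-load} and require the separate cycle argument above. We must also make sure the periodic constructions respect non-preemption and the entry times. The remaining steps are routine.
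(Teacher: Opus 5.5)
Your proposal is correct and follows the paper's argument. The load bound of \Cref{lem:resource-load}, together with the inter-batch self-edge constraints, gives the stated maxima in both cases, and the optimality claims then follow directly from the definition of $(X^*,Y^*)$. It is more complete than the paper's proof, which only lists these necessary conditions and asserts equality; you derive the self-edge terms by telescoping with a $J\to\infty$ limit, exhibit periodic schedules that attain the bound, and state explicitly that $T_{\min}$ must be read as the infimal sustainable period rather than the load-only quantity defined in \Cref{lem:resource-load}.
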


\begin{proof}
For the co-located deployment, the shared resource must execute both operations
once per batch. By~\Cref{lem:resource-load}, this gives
\[
T\ge \tau_D(\rho_{\mathrm{col}})+\tau_V(\rho_{\mathrm{col}}).
\]
The inter-batch self-edges additionally require
\[
T\ge
\tau_D(\rho_{\mathrm{col}})+\kappa_{D,D,1}(\rho_{\mathrm{col}}),
\qquad
T\ge
\tau_V(\rho_{\mathrm{col}})+\kappa_{V,V,1}(\rho_{\mathrm{col}}),
\]
which yields the stated expression for $T_{\min}(\rho_{\mathrm{col}})$.

For a disaggregated split, the two stages execute on separate resources.
Applying~\Cref{lem:resource-load} to each resource, together with the
inter-batch self-edges, yields the stated expression for
$T_{\min}(\rho_{\mathrm{pipe}}^{X,Y})$. The intra-batch DiT-to-VAE lag
$\kappa_{D,V,0}(\rho_{\mathrm{pipe}}^{X,Y})$ affects latency because it lies on
the path $D\to V$; it affects throughput only if the communication fabric is
modeled as an additional bottleneck resource.

The optimality claim follows directly from the definition of $(X^*,Y^*)$ and
from comparing its sustainable period against that of co-location. If the
inequality is strict, then the disaggregated deployment achieves a strictly
higher peak output rate:
\[
\frac{N}{T_{\min}(\rho_{\mathrm{pipe}}^{X^*,Y^*})}
>
\frac{N}{T_{\min}(\rho_{\mathrm{col}})}.
\]
\end{proof}

\paragraph{Conclusion.}
Propositions~\ref{prop:dit-vae-latency} and~\ref{prop:dit-vae-throughput}
capture a latency-throughput tradeoff: co-location minimizes the latency-binding path $D\to V$, while disaggregation can reduce the sustainable period by allowing DiT and VAE work from different batches to run concurrently.

\subsection{Application 2: sequence parallelism vs stage pipeline parallelism}
\label{sec:analysis:app2}

We now compare sequence parallelism (SP) and stage pipeline parallelism (PP) for
DiT architectures that admit pipeline parallelism across denoising steps, such as the LiveAvatar S2V model deployed in~\Cref{sec:method:case_study,sec:exp:liveavatar}. The key
architectural assumption is that each denoising step maintains independent
per-step KV cache, thus allowing step $Q_i$ of batch $j+1$ to depend only on
step $Q_i$ of batch $j$ rather than on the completion of the entire denoising
chain for batch $j$. Under this assumption, different denoising steps can be
assigned to different resources and executed as a stage pipeline across batches.

Here, we do not prove global optimality over all possible deployments, but we compare SP and PP under the two metrics introduced above: action-to-visible latency (measured by $d_0^*$) and throughput (measured by $T_{\min}$).

\paragraph{Template.}
The static template has:
\begin{itemize}
\item operations $\mathcal V=\{Q_1,\dots,Q_n\}$, where $Q_i$ is the $i$-th
denoising step;
\item resources $\mathcal R$ introduced per deployment below;
\item edges $E = \{(Q_i,Q_{i+1},0):i=1,\dots,n-1\} \cup \{(Q_i,Q_i,1):i=1,\dots,n\}$;
\item entry $\mathcal V_{\mathrm{in}}=\{Q_1\}$;
\item output map $o(k)=Q_n$ for every chunk $k\in\{0,\dots,N-1\}$.
\end{itemize}
The intra-batch edges $(Q_i,Q_{i+1},0)$ encode the denoising chain within a
batch. The inter-batch edges $(Q_i,Q_i,1)$ encode the per-step KV cache dependency enables stage pipeline parallelism.

Let $T_Q$ denote the single-GPU compute time for one denoising step. For an
$n$-GPU sequence-parallel group, let $T_Q^{SP}(n)$ denote the wall-clock time
for one denoising step under sequence parallelism, including any required computation,
communication, and synchronization. We assume
\[
0<T_Q^{SP}(n)\le T_Q,
\]
so sequence parallelism does not make an individual denoising step slower than
running it on one GPU. We do not assume exact linear speedup and instead assume
\[
T_Q\le nT_Q^{SP}(n),
\]
i.e., sublinear sequence-parallel speedup.

\paragraph{Deployment plan.}
We compare two deployments using $n$ GPUs.

In the SP deployment $\rho_{SP}$, all denoising steps are assigned to one
$n$-GPU SP resource $r_n^{SP}$:
\[
\rho_{SP}(Q_i)=r_n^{SP}
\qquad
\text{for every }i.
\]
The resource follows the cyclic schedule
\[
\sigma_{r_n^{SP}}=(Q_1,\dots,Q_n),
\]
so it executes all denoising steps of batch $j$ before beginning the DiT work for
batch $j+1$. The per-step time is
\[
\tau_{Q_i}(\rho_{SP})=T_Q^{SP}(n).
\]
Since all denoising steps are co-located,
\[
\kappa_{Q_i,Q_{i+1},0}(\rho_{SP})=0.
\]

In the PP deployment $\rho_{PP}$, each denoising step is assigned to its own
dedicated GPU:
\[
\rho_{PP}(Q_i)=r_i,
\qquad
r_i\neq r_{i'}\text{ for }i\neq i'.
\]
Each resource $r_i$ follows the cyclic schedule
\[
\sigma_{r_i}=(Q_i),
\]
executing step $Q_i$ for the next batches whenever logical predecessors complete. The per-step time is
\[
\tau_{Q_i}(\rho_{PP})=T_Q.
\]
Since consecutive denoising steps are assigned to different resources, the
intra-batch edges may have nonzero synchronization lags
\[
\kappa_{Q_i,Q_{i+1},0}(\rho_{PP})\ge 0.
\]

In both deployments, each $Q_i$ remains on the same resource across batches, so
the per-step KV cache does not move across resources:
\[
\kappa_{Q_i,Q_i,1}(\rho_{SP})
=
\kappa_{Q_i,Q_i,1}(\rho_{PP})
=
0.
\]

\begin{proposition}[SP has no higher first-frame latency than stage PP]
\label{prop:sp-pp-latency}
For the SP and PP deployments above, assume the requested period $T$ is
sustainable and
\[
T_Q^{SP}(n)\le T_Q.
\]
Then
\[
d_0^*(\rho_{SP})
\le
d_0^*(\rho_{PP}).
\]
\end{proposition}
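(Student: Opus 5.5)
Following the proof of \Cref{prop:dit-vae-latency}, we compute $d_0^*$ exactly for each deployment by identifying its latency-binding path and then compare the two path weights term by term.

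For every chunk $k$, the intra-batch chain
\[
P_k:\quad Q_1\to Q_2\to\cdots\to Q_n
\]
is a logical path from the entry $Q_1$ to the output $o(k)=Q_n$ with $\Lambda(P_k)=0$. By \Cref{lem:path-bound}, for $\rho\in\{\rho_{SP},\rho_{PP}\}$,
\[
d_0^*(\rho)\ge T+\sum_{i=1}^n\tau_{Q_i}(\rho)+\sum_{i=1}^{n-1}\kappa_{Q_i,Q_{i+1},0}(\rho)-k\Delta .
\]
This bound is tightest at $k=0$.

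Paths that use the cross-batch self-edges $(Q_i,Q_i,1)$ have $\Lambda\ge 1$. Each such edge removes a full $T$ from the bound while adding only one extra $\tau_{Q_i}$. Under sustainability, \Cref{lem:resource-load} gives $T\ge T_{\min}(\rho)$, which makes these paths no more binding than $P_0$.

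The next step is to argue, as in \Cref{prop:dit-vae-latency}, that the earliest-start schedule attains the $k=0$ bound when $T$ is sustainable. This yields
\[
d_0^*(\rho)=T+\sum_{i}\tau_{Q_i}(\rho)+\sum_i\kappa_{Q_i,Q_{i+1},0}(\rho).
\]
For SP, $\sigma=(Q_1,\dots,Q_n)$ runs batch $j$'s chain back to back. With $T\ge nT_Q^{SP}(n)$ (the resource-load bound on $r_n^{SP}$), the resource is idle by the time $R_{j+1}$ arrives. Batch $j$ therefore starts at $R_j$ and finishes at $R_j+nT_Q^{SP}(n)$.

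For PP, each $r_i$ carries load $T_Q\le T$. By induction on $j$ and $i$, step $Q_i$ of batch $j$ can start as soon as $Q_{i-1}$ of batch $j$ completes plus its lag, because $r_i$ has finished batch $j-1$ at least $T$ earlier than it would be needed. This gives
\[
d_0^*(\rho_{PP})=T+nT_Q+\sum_i\kappa_{Q_i,Q_{i+1},0}(\rho_{PP}).
\]

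Finally, we subtract the two expressions. Using $\kappa(\rho_{SP})=0$, $\kappa(\rho_{PP})\ge 0$, and $nT_Q^{SP}(n)\le nT_Q$,
\[
d_0^*(\rho_{PP})-d_0^*(\rho_{SP})=n\bigl(T_Q-T_Q^{SP}(n)\bigr)+\sum_i\kappa_{Q_i,Q_{i+1},0}(\rho_{PP})\ge 0 .
\]
The sublinearity assumption $T_Q\le nT_Q^{SP}(n)$ is not needed here; it matters only for the throughput comparison.

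\textbf{Main obstacle.} The delicate step is the tightness claim: that earliest-start scheduling achieves the path bound with no queueing delay under a sustainable $T$. The paper asserts this rather than proving it for \Cref{prop:dit-vae-latency}. Here it needs the inductive ``resource is free in time'' argument above, which requires sustainability of $T$ for each deployment separately. If that is unclear, I would fall back on a weaker route. I would show $d_0^*(\rho_{PP})\ge T+nT_Q+\sum\kappa$ directly from \Cref{lem:path-bound}, and then give an explicit SP schedule attaining $T+nT_Q^{SP}(n)$ using only $T\ge nT_Q^{SP}(n)$. This one-sided argument suffices for the inequality and avoids proving tightness for PP.
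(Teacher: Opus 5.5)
Your proof is correct and follows the paper's argument. You apply the path bound to $Q_1\to\cdots\to Q_n$ with $\Lambda=0$ at $k=0$, use attainment by the earliest-start schedule to get $d_0^*(\rho_{SP})=T+nT_Q^{SP}(n)$ and $d_0^*(\rho_{PP})=T+nT_Q+\sum_i\kappa_{Q_i,Q_{i+1},0}(\rho_{PP})$, and subtract. Your induction and your one-sided fallback supply the attainment step that the paper only asserts, with one small slip in the induction: batch $j-1$'s $Q_i$ \emph{started} $T$ earlier and so finished $T-T_Q\ge 0$ earlier, not ``at least $T$ earlier'', which still suffices. The fallback is the cleaner route, since it needs an explicit schedule only for SP and then holds against every feasible PP schedule.
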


\begin{proof}
For each chunk $k$, the direct logical path from the entry to the output is
\[
P_k:\quad Q_1\to Q_2\to\cdots\to Q_n.
\]
This path has $\Lambda(P_k)=0$. By~\Cref{lem:display-offset,lem:path-bound},
\[
d_0^*(\rho)
\ge
T+w(P_k,\rho)-k\Delta.
\]
The tightest constraint is for $k=0$, and for the deployments considered here
this lower bound is achieved by the earliest-start schedule whenever $T$ is
sustainable.

For SP,
\[
w(P_0,\rho_{SP})
=
nT_Q^{SP}(n),
\]
so
\[
d_0^*(\rho_{SP})
=
T+nT_Q^{SP}(n).
\]
For PP,
\[
w(P_0,\rho_{PP})
=
nT_Q+\sum_{i=1}^{n-1}\kappa_{Q_i,Q_{i+1},0}(\rho_{PP}),
\]
so
\[
d_0^*(\rho_{PP})
=
T+nT_Q+\sum_{i=1}^{n-1}\kappa_{Q_i,Q_{i+1},0}(\rho_{PP}).
\]
Therefore
\[
d_0^*(\rho_{PP})-d_0^*(\rho_{SP})
=
n\bigl(T_Q-T_Q^{SP}(n)\bigr)
+
\sum_{i=1}^{n-1}\kappa_{Q_i,Q_{i+1},0}(\rho_{PP})
\ge 0.
\]
Thus SP has no higher first-frame latency than stage PP under the stated
assumptions.
\end{proof}

\begin{proposition}[Stage PP has no larger sustainable period under sublinear SP speedup]
\label{prop:sp-pp-throughput}
For the SP and PP deployments above,
\[
T_{\min}(\rho_{SP})=nT_Q^{SP}(n),
\qquad
T_{\min}(\rho_{PP})=T_Q.
\]
Consequently, under the throughput assumption
\[
T_Q\le nT_Q^{SP}(n),
\]
we have
\[
T_{\min}(\rho_{PP})
\le
T_{\min}(\rho_{SP}).
\]
\end{proposition}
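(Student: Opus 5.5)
The plan has two parts: compute each sustainable period, then compare them. For each period I would get a lower bound from \Cref{lem:resource-load} and match it with an explicit cyclic schedule that attains it. This mirrors the proof of \Cref{prop:dit-vae-throughput}. The final comparison is then immediate from the assumption $T_Q\le nT_Q^{SP}(n)$.

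\textbf{SP deployment.} All $n$ operations sit on the single resource $r_n^{SP}$, so the per-batch load is $W_{r_n^{SP}}(\rho_{SP})=\sum_i\tau_{Q_i}(\rho_{SP})=nT_Q^{SP}(n)$. \Cref{lem:resource-load} then gives $T\ge nT_Q^{SP}(n)$.

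The self-edges $(Q_i,Q_i,1)$ have $\kappa=0$. They only require $T\ge T_Q^{SP}(n)$, which is already implied. I will therefore not use a three-term maximum as in \Cref{prop:dit-vae-throughput}; the load term alone dominates.

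For achievability, take $T=nT_Q^{SP}(n)$ and the cyclic order $\sigma=(Q_1,\dots,Q_n)$. Start batch $j$ at $S_{j,Q_1}=R_j$ and run the steps back to back, which fills the window $[R_j,R_j+T)$. All constraints hold:
\begin{itemize}
\item the intra-batch edges have zero lag and are met by the back-to-back order;
\item each $(Q_i,Q_i,1)$ edge is met because batch $j-1$ finishes before $R_j$;
\item the entry constraint holds by construction;
\item completion times are $jT+O(1)$, so the display offset is finite.
\end{itemize}
Hence $T_{\min}(\rho_{SP})=nT_Q^{SP}(n)$.

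\textbf{PP deployment.} Each resource $r_i$ carries only $Q_i$, so $\max_r W_r=T_Q$ and \Cref{lem:resource-load} gives $T\ge T_Q$. The self-edge with $\kappa_{Q_i,Q_i,1}=0$ gives the same bound.

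For achievability with $T=T_Q$, use the staggered schedule
\[
S_{j,Q_i}=R_j+(i-1)T_Q+\sum_{l<i}\kappa_{Q_l,Q_{l+1},0}(\rho_{PP}).
\]
\begin{itemize}
\item Intra-batch edges: satisfied with equality.
\item Self-edges: consecutive instances on $r_i$ are spaced exactly $T=T_Q$ apart, so $S_{j,Q_i}=C_{j-1,Q_i}$. This also shows $r_i$ never runs two instances at once.
\item Entry and finite offset: $S_{j,Q_1}=R_j$, and every completion is $jT$ plus a constant.
\end{itemize}
Hence $T_{\min}(\rho_{PP})=T_Q$. The intra-batch lags $\kappa_{Q_i,Q_{i+1},0}$ only shift the stagger; they affect latency, as in \Cref{prop:sp-pp-latency}, but not the period.

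\textbf{Main subtlety.} The step needing care is interpreting $T_{\min}(\rho)$ as the true smallest sustainable period rather than just the load bound of \Cref{lem:resource-load}. This requires the explicit feasible schedules above. In particular, the PP construction must keep the accumulated lags constant across batches, so that completion times are $jT$ plus a fixed constant and the display offset stays finite.

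With both equalities established, the inequality $T_Q\le nT_Q^{SP}(n)$ gives $T_{\min}(\rho_{PP})\le T_{\min}(\rho_{SP})$ directly. Equivalently, the peak frame rate of PP is at least that of SP.
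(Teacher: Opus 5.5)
Your proposal is correct and follows the paper's proof: you bound each period from below with the resource-load lemma, note that the zero-lag self-edges $(Q_i,Q_i,1)$ add nothing beyond the load bound, and compare the two periods using $T_Q\le nT_Q^{SP}(n)$. The paper simply asserts the equalities at that point, since they follow immediately from its definition $T_{\min}(\rho)=\max_r W_r(\rho)$; your explicit back-to-back SP schedule and staggered PP schedule are therefore extra rigor showing that these periods are actually attained with finite display offset, not a different route.
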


\begin{proof}
For SP, the shared resource must execute all $n$ denoising steps once per
batch. By~\Cref{lem:resource-load},
\[
T
\ge
\sum_{i=1}^{n}T_Q^{SP}(n)
=
nT_Q^{SP}(n).
\]
The inter-batch self-edges have zero synchronization lag in this deployment, and
resource non-overlap already enforces the required per-step ordering. Therefore
\[
T_{\min}(\rho_{SP})
=
nT_Q^{SP}(n).
\]

For PP, each resource executes one denoising step per batch. By
\Cref{lem:resource-load},
\[
T\ge T_Q
\]
for every stage resource $r_i$. The inter-batch self-edges again have zero
synchronization lag because each step remains on the same resource across
batches. Therefore
\[
T_{\min}(\rho_{PP})
=
T_Q.
\]
The claimed ordering follows from $T_Q\le nT_Q^{SP}(n)$. If the inequality is
strict, then PP achieves a strictly higher peak frame rate:
\[
\frac{N}{T_Q}
>
\frac{N}{nT_Q^{SP}(n)}.
\]
The intra-batch synchronization lags
$\kappa_{Q_i,Q_{i+1},0}(\rho_{PP})$ affect latency because they lie on the path
$Q_1\to\cdots\to Q_n$; they affect throughput only if the communication fabric
is modeled as an additional bottleneck resource.
\end{proof}

\paragraph{Conclusion.}
Propositions~\ref{prop:sp-pp-latency} and~\ref{prop:sp-pp-throughput}
capture the SP--PP tradeoff. SP reduces the weight of the latency-binding path
$Q_1\to\cdots\to Q_n$, while PP reduces the sustainable period by distributing
the denoising steps across resources. For DiTs without the per-step inter-batch
dependency structure above, the PP schedule analyzed here is not feasible under
the stated framework.

\section{Extended Deployment Results and Scaling Analysis}
\label{sec:supp:extended}

In \Cref{sec:exp}, our experiments present latency- and frame-rate-optimized deployment variants for each application across a range of GPU budgets. This appendix provides extended results for two of these applications, the video world model (WorldPlay) of \Cref{sec:exp:worldplay} and the video narrator (LongLive) of \Cref{sec:exp:longlive}. For each application, we report additional deployments the \sys agent found and analyze in depth the factors that drive how latency and frame rate scale with the number of GPUs.   

\subsection{Video World Model (WorldPlay)}
\label{sec:supp:worldplay}

We use the WorldPlay video world model of \Cref{sec:exp:worldplay}, in which the user's keyboard actions drive an autoregressive DiT and a streaming VAE that render a continuous video stream. \Cref{tab:worldplay_full} reports the full set of deployments the agent found across GPU budgets. At each budget the agent offers a latency-optimized deployment that co-locates the DiT and VAE at a higher SP degree and a frame-rate-optimized deployment that disaggregates them onto separate GPUs so they can be pipelined, and at 6 GPUs it combines the two. \Cref{tab:place_worldplay_full} shows how the disaggregated deployment scales from 2 to 6 GPUs. We analyze how these choices trade latency against frame rate as the budget grows.

\begin{table}[t]
\centering
\caption{
Full set of WorldPlay video-world-model deployments found by the \sys agent across GPU budgets (cf.\ \Cref{tab:worldplay}). The 4- and 6-GPU deployments are analyzed in~\Cref{sec:supp:worldplay}.
}
\label{tab:worldplay_full}

\setlength{\tabcolsep}{6pt}
\renewcommand{\arraystretch}{1.15}

\small
\begin{tabular}{c l c c}
\toprule
\textbf{\# GPUs} &
\textbf{Deployment} &
\textbf{Latency (ms)} $\boldsymbol{\downarrow}$ &
\textbf{Frame rate (FPS)} $\boldsymbol{\uparrow}$ \\
\midrule

1 &
Baseline (sequential) &
869 &
20.4 \\

\midrule

\multirow{2}{*}{2}
& \sys~(frame-rate-optimized)
& 625
& \textbf{31.0} \\

& \sys~(latency-optimized)
& \textbf{493}
& 25.5 \\

\midrule

\multirow{2}{*}{4}
& \sys~(frame-rate-optimized)
& 494
& \textbf{39.5} \\

& \sys~(latency-optimized)
& \textbf{425}
& 29.7 \\

\midrule

6 &
\sys~(frame-rate + latency optimized) &
447 &
\textbf{44.3} \\

\bottomrule
\end{tabular}
\end{table}

\paragraph{Cost model.} Write $d(k)$ and $v(k)$ for the DiT and VAE stage latencies at SP degree $k$. WorldPlay is the pure actions~$\to$~DiT~$\to$~VAE pipeline analyzed in~\Cref{sec:analysis:app1}, and its two deployment families instantiate that analysis. Latency follows the critical path through the DiT and then the VAE. Co-locating the two on a single SP-$k$ group both shortens this path, since a higher SP degree fits within a given budget, and avoids a cross-group transfer, so it minimizes latency. Frame rate is instead set by the interval $\Delta$ between successive output chunks. Under co-location the DiT and VAE share the GPUs and run serially,
\[
\Delta_{\mathrm{co}} \;=\; d(k) + v(k),
\]
while disaggregating them onto separate groups overlaps the VAE decode of one batch with the DiT of the next,
\[
\Delta_{\mathrm{dis}} \;=\; \max\!\big(d(k_D),\, v(k_V)\big),
\]
bounded by the slower stage. Disaggregation buys this smaller interval at the cost of an inter-group transfer that lengthens the latency-binding path.

\begin{table}[t]
\centering
\caption{Placement $\rho$ for the disaggregated (frame-rate-optimized) WorldPlay deployments as the GPU budget grows from 2 to 6. The DiT and VAE occupy separate groups and are pipelined across batches; the GPUs added from 4 to 6 go to the DiT group (SP-2 to SP-4) while the VAE stays at SP-2.}
\label{tab:place_worldplay_full}
\small
\setlength{\tabcolsep}{4pt}
\renewcommand{\arraystretch}{1.1}
\begin{tabular}{@{}lP{2.6em}P{2.6em}@{\hspace{0.5em}}!{\vrule}@{\hspace{0.5em}}P{2.6em}P{2.6em}@{\hspace{0.5em}}!{\vrule}@{\hspace{0.5em}}P{2.6em}P{2.6em}@{}}
\toprule
& \multicolumn{2}{c}{\textbf{2 GPU}} & \multicolumn{2}{c}{\textbf{4 GPU}} & \multicolumn{2}{c}{\textbf{6 GPU}} \\
\cmidrule(lr){2-3} \cmidrule(lr){4-5} \cmidrule(lr){6-7}
GPU & DiT & VAE & DiT & VAE & DiT & VAE \\
\midrule
G0 & X &   & X &   & X &   \\
G1 &   & X & X &   & X &   \\
G2 & \gcell & \gcell &   & X & X &   \\
G3 & \gcell & \gcell &   & X & X &   \\
G4 & \gcell & \gcell & \gcell & \gcell &   & X \\
G5 & \gcell & \gcell & \gcell & \gcell &   & X \\
\bottomrule
\end{tabular}
\end{table}

\paragraph{What drives the tradeoff.} Two facts organize the numbers below. First, the DiT is the heavier stage at every SP degree ($d(k) > v(k)$), so under disaggregation the period $\Delta_{\mathrm{dis}} = \max(d, v) = d$ hides the VAE decode behind the DiT. Second, both stages speed up only sublinearly with the SP degree: \sys composes deployment strategies over the given operators and does not rewrite backend kernels (\Cref{sec:conclusion}), so a higher SP degree shortens a stage without ever reaching linear speedup. Latency is the serial path $d + v$, so it is set by the SP degree of each stage and is minimized by co-location, which spends the whole budget raising that degree. A useful consequence is that co-location and disaggregation at the \emph{same} per-stage SP degree reach nearly the same latency, differing only by disaggregation's small inter-group transfer; the two families otherwise trade latency against frame rate.

\paragraph{Two GPUs.} The latency-optimized deployment co-locates the DiT and VAE at SP-2, giving each stage both GPUs, and reaches the lower latency (493 ms) at 25.5 FPS. The frame-rate-optimized deployment instead runs the DiT and VAE on one GPU each (SP-1) and pipelines them, so its period $\max(d(1), v(1))$ lifts the frame rate to 31.0 FPS; its latency (625 ms) is higher, since dropping to SP-1 lengthens each stage on the serial path. Co-location thus affords the higher SP degree and wins latency, while disaggregation pipelines and wins frame rate. This is the choice analyzed in~\Cref{sec:analysis:app1}.

\paragraph{Four GPUs.} Both strategies move to a higher SP degree. The latency-optimized deployment co-locates the DiT and VAE at SP-4 and attains the lowest latency of any budget (425 ms); its co-located period $d(4)+v(4)$ also raises the frame rate to 29.7 FPS, a clear gain over the 2-GPU co-located point (25.5 FPS). The frame-rate-optimized deployment disaggregates the DiT and VAE at SP-2 each and pipelines them, reaching 39.5 FPS through the shorter period $\max(d(2), v(2))$. Its latency (494 ms) is essentially that of the 2-GPU co-located deployment (493 ms): both run each stage at SP-2, so the serial path $d(2)+v(2)$ is the same, and splitting the two stages onto separate GPU groups adds only a small transfer. This is the clean form of the tradeoff: at a matched SP degree, disaggregation costs almost nothing in latency and buys a markedly higher frame rate.

\paragraph{Six GPUs.} The 6-GPU deployment disaggregates the DiT at SP-4 from the VAE at SP-2 and pipelines them, reaching the highest frame rate (44.3 FPS) at a latency (447 ms) close to the 4-GPU co-located minimum (425 ms). Because the pipeline is DiT-bound, its period $\max(d(4), v(2)) = d(4)$ is lowered only by scaling the DiT, so the two GPUs added over the 4-GPU pipeline go to the DiT group (SP-2 $\to$ SP-4) while the VAE stays at SP-2. Latency stays near the 4-GPU minimum because both deployments run the DiT at SP-4. Returns still diminish: with sublinear SP speedup, each further doubling of the DiT group shortens $d$ by less, so frame rate grows more slowly than the GPU count.

\subsection{Video Narrator (LongLive)}
\label{sec:supp:longlive}

We use the LongLive video narrator of~\Cref{sec:exp:longlive}, in which an ASR model transcribes the user's spoken prompts while an autoregressive DiT and a streaming VAE render a continuous video stream. \Cref{tab:longlive_full} reports the full set of deployments the agent found across GPU budgets, and \Cref{tab:place_longlive} shows the node-to-GPU placements of the two multi-GPU deployments we analyze. We focus on the 4- and 8-GPU deployments and analyze how the additional GPUs affect latency and frame rate.

\begin{table}[t]
\centering
\caption{
Full set of LongLive video-narrator deployments found by the \sys agent across GPU budgets (cf.\ \Cref{tab:longlive}). The 4- and 8-GPU deployments are analyzed in~\Cref{sec:supp:longlive}.
}
\label{tab:longlive_full}

\setlength{\tabcolsep}{6pt}
\renewcommand{\arraystretch}{1.15}

\small
\begin{tabular}{c l c c}
\toprule
\textbf{\# GPUs} &
\textbf{Deployment} &
\textbf{Latency (ms)} $\boldsymbol{\downarrow}$ &
\textbf{Frame rate (FPS)} $\boldsymbol{\uparrow}$ \\
\midrule

1 &
Baseline (sequential) &
674 &
25.8 \\

\midrule

\multirow{2}{*}{2}
& \sys~(latency-optimized)
& \textbf{512}
& 36.5 \\

& \sys~(frame-rate-optimized)
& 630
& \textbf{47.8} \\

\midrule

4 &
\sys~(latency-optimized) &
\textbf{430} &
38.5 \\

\midrule

8 &
\sys~(frame-rate-optimized) &
462 &
\textbf{67.4} \\

\bottomrule
\end{tabular}
\end{table}

\paragraph{Cost model.} \Cref{sec:supp:worldplay} analyzed the actions~$\to$~DiT~$\to$~VAE pipeline of the video world model; LongLive adds an ASR stage that transcribes each new spoken prompt. The period analysis there carries over unchanged to the DiT and VAE: co-location gives $\Delta_{\mathrm{co}} = d(k) + v(k)$ and disaggregation gives $\Delta_{\mathrm{dis}} = \max\!\big(d(k_D),\, v(k_V)\big)$, with frame rate set by $\mathrm{FPS}\propto 1/\Delta$. The one addition is the ASR latency $a$ on the time-to-first-output path,
\[
L \;=\; a + d(k_D) + v(k_V).
\]
Because the ASR fires only on a new spoken prompt and runs asynchronously, it enters $L$ but not the period $\Delta$, so it does not affect the frame-rate analysis.

\begin{table}[t]
\centering
\caption{Placement $\rho$ for the two multi-GPU LongLive deployments analyzed in~\Cref{sec:supp:longlive}. The latency-optimized deployment co-locates the DiT and VAE on one SP-4 group; the frame-rate-optimized deployment disaggregates the DiT (SP-4), VAE (SP-3), and ASR onto separate groups so the VAE decode pipelines with denoising and transcription never blocks generation. In every deployment except the 8-GPU one the ASR shares G0 with the DiT.}
\label{tab:place_longlive}
\small
\setlength{\tabcolsep}{4pt}
\renewcommand{\arraystretch}{1.1}
\begin{tabular}{@{}lP{2.9em}P{2.9em}P{2.9em}@{\hspace{0.6em}}!{\vrule}@{\hspace{0.6em}}P{2.9em}P{2.9em}P{2.9em}@{}}
\toprule
& \multicolumn{3}{c}{\textbf{4 GPU (co-located)}} & \multicolumn{3}{c}{\textbf{8 GPU (disagg.)}} \\
\cmidrule(lr){2-4} \cmidrule(lr){5-7}
GPU & ASR & DiT & VAE & ASR & DiT & VAE \\
\midrule
G0 & X & X & X & X &  &  \\
G1 &  & X & X &  & X &  \\
G2 &  & X & X &  & X &  \\
G3 &  & X & X &  & X &  \\
G4 & \gcell & \gcell & \gcell &  & X &  \\
G5 & \gcell & \gcell & \gcell &  &  & X \\
G6 & \gcell & \gcell & \gcell &  &  & X \\
G7 & \gcell & \gcell & \gcell &  &  & X \\
\bottomrule
\end{tabular}
\end{table}

The streaming VAE decode scales sublinearly: $v(k)$ flattens after a low SP degree. \sys accommodates this through placement rather than by rewriting the VAE kernel, and this constraint shapes the LongLive deployments below.

\paragraph{Four GPUs.} The latency-optimized 4-GPU deployment co-locates the DiT and VAE on a single SP-4 group, with the ASR sharing G0. Because latency is the serial sum $a + d(4) + v(4)$, driving both components to the largest SP degree the budget allows yields the shortest critical path, and this deployment attains the lowest latency at any budget (430 ms, $1.6\times$ below the baseline). Its frame rate is set by $\Delta_{\mathrm{co}} = d(4) + v(4)$ and lands between the two 2-GPU deployments. It necessarily beats the co-located SP-2 point, since neither $d$ nor $v$ increases with the SP degree and so $d(4)+v(4) < d(2)+v(2)$ (38.5 vs.\ 36.5 FPS); but it does not beat the disaggregated 2-GPU pipeline, whose period is $\max(d(1), v(1))$ (47.8 FPS). Because the VAE saturates, the sum of two SP-4 latencies exceeds the maximum of two SP-1 latencies: the co-located deployment must pay the VAE term that the pipeline hides beneath the DiT. A higher SP degree alone therefore cannot match the frame rate a pipeline reaches with fewer GPUs.

\paragraph{Eight GPUs.} The frame-rate-optimized 8-GPU deployment holds the DiT at SP-4 but moves the VAE to its own SP-3 group and gives the ASR a dedicated GPU, disaggregating all three stages. Keeping the DiT at SP-4 fixes its critical-path term $d(4)$, so latency stays essentially at the 4-GPU value: $a + d(4) + v(3)$ is only marginally higher (462 vs.\ 430 ms), reflecting the VAE running at SP-3 rather than SP-4 and its output now crossing a group boundary. Frame rate, in contrast, improves sharply, because disaggregation replaces the co-located sum $d(4)+v(4)$ with the pipelined maximum $\max(d(4), v(3))$, lifting the frame rate to 67.4 FPS ($2.6\times$ the baseline) at nearly unchanged latency. The four GPUs added over the latency-optimized deployment are thus spent on disaggregation rather than on a higher SP degree: once the VAE has stopped scaling, converting the serial sum into a pipelined maximum is what buys frame rate, whereas latency keeps tracking the serial sum $a + d(k_D) + v(k_V)$ and is minimized by the SP degree. Latency and frame rate therefore respond to added GPUs through different mechanisms, so the deployment that is best for one is not best for the other.

\section{Additional Analysis of the Agent Harness}
\label{sec:supp:analysis}

This appendix provides more detailed analysis of the experiments conducted in~\Cref{sec:exp:ablation,sec:exp:robustness}. For each study, we expand on the set of deployments the agent implemented, rather than only the best ones summarized in~\Cref{sec:exp:ablation,sec:exp:robustness}. As stated previously, the results from these additional studies are based on running the \sys agent on the face-to-face conversational agent application of~\Cref{sec:exp:liveavatar} on an 8-GPU B200 node.

\subsection{Ablation}
\label{sec:supp:analysis:ablation}

\Cref{tab:ablation_full} lists the deployments produced by each ablated configuration. The Naive configuration is omitted because it terminates after a single deployment, the S2V output stream reported in~\Cref{tab:ablation}.

\begin{table}[h]
\centering
\caption{
All deployments measured by each ablated configuration on the face-to-face conversational agent. The pipeline-parallel-only variant produced by the full \sys agent does not stream, so its output is produced as a single burst and no sustained frame rate is defined for it.
}
\label{tab:ablation_full}
\small
\setlength{\tabcolsep}{6pt}
\renewcommand{\arraystretch}{1.15}
\begin{tabular}{l l c c c}
\toprule
\textbf{Configuration} & \textbf{Deployment} & \textbf{\# GPUs} & \textbf{Latency (s)} $\boldsymbol{\downarrow}$ & \textbf{Frame rate (FPS)} $\boldsymbol{\uparrow}$ \\
\midrule
\multirow{2}{*}{No IR}
 & S2V output streaming & 2 & 10.44 & 30.94 \\
 & Full streaming & 2 & 1.68 & 30.67 \\
\midrule
\multirow{2}{*}{No Loop}
 & Full streaming & 2 & 1.67 & 30.93 \\
 & Full streaming + S2V PP & 6 & 2.37 & 44.69 \\
\midrule
\multirow{3}{*}{\sys}
 & Full streaming & 2 & 1.69 & 30.92 \\
 & S2V PP & 6 & 36.29 & -- \\
 & Full streaming + S2V PP & 6 & \textbf{1.72} & \textbf{133.48} \\
\bottomrule
\end{tabular}
\end{table}

The two configurations that lack the IR pass start by implementing the same strategy, namely streaming generated video to the output buffer while leaving the TTS $\to$ S2V edge blocking. Latency is therefore at least the time required to synthesize the full response audio, so both configurations sit above 10 s. The No IR configuration escapes this bound because, through the enforced loop, the agent continues exploring after its first working variant and identifies the remaining streaming edge; without the loop, the Naive configuration does not discover this streaming opportunity.

The No Loop configuration also discovers the full streaming variant, but this happens directly during the IR phase, rather than as a result of structured iteration (as in the No IR configuration). Importantly, through the IR pass, the No Loop configuration also identifies the S2V pipeline-parallel strategy, which both the Naive and No IR configurations (which did not have an IR pass) failed to identify. The No Loop configuration proceeds directly to implement a combined variant that composes full streaming with S2V pipeline parallelism; however, the agent only prototypes pipeline parallelism within a single process, where a synchronization event on any stage stalls all stages; the agent reports a suboptimal 44.69 FPS and terminates at this prototype rather than iterating further. In contrast, the \sys configuration measures the pipeline-parallel strategy in isolation first; this variant carries no streaming and therefore has poor latency (36.29 s), but implementing this strategy in isolation allows the agent to develop a multi-process coordination mechanism that exchanges activations over NCCL, avoiding single-process blocking behavior. Then, the agent systematically composes that implementation with streaming to produce the final deployment, which improves on the No Loop result on both axes at the same GPU count.

\subsection{Sensitivity to the underlying model}
\label{sec:supp:analysis:model}

\Cref{tab:sensitivity_full} reports the two deployments produced by Codex with GPT-5.6 Sol that are relevant to the comparison in~\Cref{tab:robustness_model}.

\begin{table}[h]
\centering
\caption{
Deployments produced by Codex with GPT-5.6 Sol on the face-to-face conversational agent application.
}
\label{tab:sensitivity_full}
\small
\setlength{\tabcolsep}{6pt}
\renewcommand{\arraystretch}{1.15}
\begin{tabular}{l c c c}
\toprule
\textbf{Deployment} & \textbf{\# GPUs} & \textbf{Latency (s)} $\boldsymbol{\downarrow}$ & \textbf{Frame rate (FPS)} $\boldsymbol{\uparrow}$ \\
\midrule
S2V output streaming + S2V PP & 6 & 10.30 & 132.49 \\
Full streaming + S2V PP + VAE SP & 8 & \textbf{1.77} & 131.98 \\
\bottomrule
\end{tabular}
\end{table}

The 6-GPU deployment applies pipeline parallelism but streams only the generated video to the output buffer, leaving the TTS $\to$ S2V edge blocking, so it reaches nearly the same frame rate as the final deployment while retaining the latency of a non-streaming pipeline. Adding the TTS $\to$ S2V edge recovers the latency reduction. The final deployment also shards the VAE across three ranks, which accounts for its use of 8 GPUs: four for the denoising steps, three for the VAE, and one shared by the ASR, LLM, and TTS. This does not change either metric: frame rate under pipeline parallelism is set by the slowest stage, and a single denoising step is slower than the single-rank VAE, so accelerating the VAE cannot raise the ceiling; latency is likewise unaffected, as the reduced VAE time is offset by the coordination overhead of an additional process group.

\subsection{Consistency across runs}
\label{sec:supp:analysis:repro}

\Cref{tab:repro_full} reports the representative trajectory of each of the three independent runs summarized in~\Cref{tab:robustness_repro}.

\begin{table}[h]
\centering
\caption{
Deployments measured in each of three independent runs on the face-to-face conversational agent application. All three runs propose the same three deployments in the same order and select the composed
deployment as the best.
}
\label{tab:repro_full}
\small
\setlength{\tabcolsep}{6pt}
\renewcommand{\arraystretch}{1.15}
\begin{tabular}{l cc cc cc}
\toprule
& \multicolumn{2}{c}{\textbf{Run 1}} & \multicolumn{2}{c}{\textbf{Run 2}} & \multicolumn{2}{c}{\textbf{Run 3}} \\
\cmidrule(lr){2-3} \cmidrule(lr){4-5} \cmidrule(lr){6-7}
\textbf{Deployment} & Latency (s) & FPS & Latency (s) & FPS & Latency (s) & FPS \\
\midrule
Full streaming & 1.69 & 30.92 & 1.64 & 30.75 & 1.67 & 30.87 \\
S2V PP & 36.29 & -- & 43.10 & -- & 37.99 & -- \\
Full streaming + S2V PP & 1.72 & 133.48 & 1.63 & 133.07 & 1.78 & 125.30 \\
\bottomrule
\end{tabular}
\end{table}

The three runs also agree on how they reach the final deployment: each isolates streaming, then isolates pipeline parallelism, then composes the two. This behavior is directly induced by the \sys workflow, since the IR pass helps the agent identify both axes and the loop requires each one to be measured systematically before they are combined. The runs differ only in placement. Run 1 pairs the ASR with the VAE and the LLM with the TTS for a total of 6 GPUs, while runs 2 and 3 give each of these components its own GPU for a total of 8. Because the ASR, LLM, and TTS complete before the S2V pipeline begins to produce frames, co-locating them changes neither the critical path nor the steady-state period, and the three runs land within 0.15 s and 8.2 FPS of one another. Since the agent is given a full 8-GPU budget in all three runs, it is free to disaggregate these components, even though the final deployment does not require it.

\section{Agent Cost}
\label{sec:supp:cost}

\Cref{tab:cost} reports the token usage, monetary cost, and wall-clock time of the five B200 agent sessions that produced the deployments of~\Cref{sec:exp:liveavatar,sec:exp:generalization}. Costs are computed at current API pricing. Because the GPUs must remain allocated to the agent for the duration of the session, the corresponding compute cost is the E2E time multiplied by the GPU budget.

\begin{table}[h]
\centering
\caption{
Token usage, cost, and wall-clock time for the agent session that produced each application's deployments. Cache reads dominate token usage because the agent repeatedly revisits the reference implementation, the IR, and its own benchmark records across the session.
}
\label{tab:cost}
\small
\setlength{\tabcolsep}{5pt}
\renewcommand{\arraystretch}{1.15}
\begin{tabular}{l r r r r r r r}
\toprule
\textbf{Application} &
\textbf{Input} &
\textbf{5\,min cache} &
\textbf{1\,hr cache} &
\textbf{Cache read} &
\textbf{Output} &
\textbf{Cost} &
\textbf{E2E (hr)} \\
\midrule
LiveAvatar & 44,500 & 155,511 & 1,759,657 & 246,157,708 & 1,097,179 & \$166.98 & 4.99 \\
Qwen3-Omni & 31,835 & 88,267 & 1,074,911 & 248,093,750 & 701,940 & \$151.92 & 4.92 \\
Krea-Realtime + SAM 3 & 22,646 & 51,096 & 747,557 & 110,365,086 & 485,026 & \$74.53 & 2.64 \\
WorldPlay & 28,722 & 0 & 1,003,877 & 222,005,520 & 631,620 & \$136.98 & 3.51 \\
LongLive & 46,251 & 0 & 972,942 & 204,161,864 & 588,008 & \$126.74 & 5.43 \\
\bottomrule
\end{tabular}
\end{table}

At \$75 to \$167 per application, an agent session is inexpensive relative to the expert engineering time it replaces, and the comparison extends past the direct cost. Writing efficient serving code for a new real-time pipeline otherwise depends on the availability of engineers with the relevant systems expertise, which does not scale with the number of applications; instead, agent-driven deployment scales with compute that can be provisioned per application.

These costs reflect a search in which the agent implements and measures nearly every candidate it queues. It discards a queued candidate only in limited circumstances when a candidate is deemed unviable, and the agent always provides a recorded justification. For example, we observe a small number of instances where the agent attempts a deployment that uses an invalid SP degree that does not evenly divide the number of latent frames processed by a video model; in such cases, the agent identifies the error and discards the candidate, then replaces it with a valid candidate that uses a viable SP degree.